\definecolor{mydarkblue}{rgb}{0,0.08,0.45}
\newcommand{\punt}[1]{}
\newtheorem{theorem}{Theorem}
\newtheorem{corollary}{Corollary}
\def\argmax{\mathop{\rm arg\,max}}
\newcommand{\reals}{\mathbb{R}}
\newcommand{\expect}{\mathbb{E}}
\newcommand{\kl}{\textrm{KL}}
\def\argmax{\mathop{\rm arg\,max}}
\newcommand{\bq}{\begin{equation}}
\newcommand{\eq}{\end{equation}}
\newcommand{\ba}{\begin{eqnarray}}
\newcommand{\ea}{\end{eqnarray}}
\newcommand{\mcal}[1]{\mathcal{#1}}
\newcommand{\remove}[1]{}
\newcommand{\ie}{\textit{i}.\textit{e}.}
\newcommand{\eg}{\textit{e}.\textit{g}.}
\newcommand{\tr}{\text{tr}}
\icmltitlerunning{On Implicit Regularization in $\beta$-VAEs}
\begin{document}
\twocolumn[
\icmltitle{On Implicit Regularization in $\beta$-VAEs}

\begin{icmlauthorlist}
\icmlauthor{Abhishek Kumar}{goo}
\icmlauthor{Ben Poole}{goo}
\end{icmlauthorlist}

\icmlaffiliation{goo}{Google Research, Brain Team}

\icmlcorrespondingauthor{Abhishek Kumar}{abhishk@google.com}
\icmlcorrespondingauthor{Ben Poole}{pooleb@google.com}

\icmlkeywords{variational inference, implicit regularization, uniqueness, identifiability, disentanglement}
\vskip 0.3in
]

\printAffiliationsAndNotice{}  %

\begin{abstract}
While the impact of variational inference (VI) on posterior inference in a fixed generative model is well-characterized, its role in regularizing a learned generative model when used in variational autoencoders (VAEs) is poorly understood. 
We study the regularizing effects of 
variational distributions
on learning in generative models from two perspectives. First, we analyze the role that the choice of variational family plays in 
imparting uniqueness to the learned model 
by restricting the set of optimal generative models. Second, we study the regularization effect of the variational family on the local geometry of the decoding model. This analysis uncovers the regularizer implicit in the $\beta$-VAE objective, and leads to an approximation consisting of a deterministic autoencoding objective plus analytic regularizers that depend on the Hessian or Jacobian of the decoding model, unifying VAEs with recent heuristics proposed for training regularized autoencoders. We empirically verify these findings, observing that the proposed deterministic objective exhibits similar behavior to the 
$\beta$-VAE in terms of objective value and sample quality.
\end{abstract}

\section{Introduction}
Variational autoencoders (VAEs) \citep{kingma2013auto, rezende2014stochastic}
and variants such as $\beta$-VAEs \citep{higgins2016beta,alemi2017fixing} have been widely used for density estimation and learning generative samplers of data where latent variables in the model may correspond to factors of variation in the underlying data-generating process \citep{bengio2013representation}.
However, without additional assumptions it may not be possible to recover or identify the correct model \citep{hyvarinen1999nonlinear,locatello2019challenging}.  

We study the regularizing effects of variational distributions used to approximate the posterior on the learned generative model from two perspectives. 
First, we discuss the role of \emph{the choice of variational family} in imparting uniqueness properties to the learned generative model by restricting the set of possible solutions. 
We argue that restricting the variational family in specific ways can rule out non-uniqueness to a corresponding set of transformations on the generative model. 
Next, we study the impact of the variational family on the geometry of the decoding model ($p_\theta(x|z)$) in $\beta$-VAEs.  Assuming that the first two moments exist for the variational family,  
we can relate the structure of the variational distribution's covariance to the Jacobian matrix of the decoding model. For a Gaussian prior and variational distribution, we further use this relation to derive a deterministic objective that closely approximates the $\beta$-VAE objective. %

We perform experiments to validate the correctness of our theory and accuracy of our approximations. On the MNIST dataset, we empirically verify the relationship between variational distribution covariance and the Jacobian of the decoder. In particular, we observe that block-diagonal variational distribution covariance encourages a block diagonal structure on the matrices $J_g(z)^\top J_g(z)$ where $J_g(z)$ is the decoder Jacobian matrix. This generalizes the results from recent work \citep{rolinek2019variational}, which showed %
(for Gaussian observation models) that \emph{diagonal} Gaussian variational distributions encourage orthogonal Jacobians, to general observation models and arbitrary structures on the covariance matrix. We also compare the objective values of the original $\beta$-VAE objective with our derived deterministic objectives. While our deterministic approximation holds for small $\beta$, we empirically find that the two objectives remain in agreement for a reasonably wide range of $\beta$, verifying the accuracy of our regularizers. 
We also train models using a tractable lower bound on the deterministic objectives for CelebA and find that trained models behave similar to $\beta$-VAE in terms of sample quality. %

\section{Latent-variable models and identifiability}
We begin by discussing inherent identifiability issues with  latent-variable models trained using maximum log-likelihood alone.

{\bf Latent-variable models.} We consider latent-variable generative models that consist of a fixed prior $p(z)$, %
and a conditional generative model or {\em decoder}, $p_\theta(x|z)$. In maximum-likelihood estimation, we aim to maximize the marginal log-likelihood of the training data, $\log p_\theta(x) = \log \int dz\, p_\theta(z)p_\theta(x|z)$ in terms of the parameters $\theta$. 
A desirable goal in learning latent-variable models is that the learned latent variables, $z$, correspond simply to ``ground-truth factors of variation'' that generated the dataset \citep{bengio2013representation, higgins2016beta}. For example, in a dataset of shapes we aim to learn latent variables that are in one-to-one correspondence with attributes of the shape like size, color, and position. Learning such ``disentangled'' representations may prove useful for downstream tasks such as interpretability and few-shot learning \citep{gilpin2018explaining, steenkiste2019disentangled}.

{\bf Identifiability and non-uniqueness.} Unfortunately, there is an inherent identifiability issue with latent-variable models trained with MLE: there are many models that have the same marginal data and prior distributions, $p_\theta(x)$ and $p(z)$, but entirely different latent variables $z$.
We can find transformations $r$ of the latent variable $z$ that leave its marginal distribution $p(z)$ the same, and thus the marginal data distribution $p_\theta(x)$ remains the same for a fixed decoder $p_\theta(x|z)$. The transformation $r$ is not necessarily just a simple permutation or nonlinear transformation of individual latents, but {\em mixes} components of the latent representation. %
This leads to a {\em set} of solutions instead of a {\em unique} solution, all of which are equal according to their marginal log-likelihood, but that have distinct representations. Thus identifying the ``right'' model whose latents are in one-to-one correspondence with factors of variation is not feasible with MLE alone.
In our work, we adopt the definition of identifiability from the ICA literature that ignores permutations and nonlinear transformations that act separately on each latent dimension \citep{hyvarinen1999nonlinear}. %
This identifiability issue is well-known in the ICA literature \citep{hyvarinen1999nonlinear} and causal inference literature \citep{peters2017elements}, and has been highlighted more recently in the context of learning disentangled representations with factorized priors \citep{locatello2019challenging}. In Appendix A, %
we summarize the construction from \citet{locatello2019challenging} for factorized priors, and extend their non-uniqueness result to correlated priors.

\section{Uniqueness via variational family}
In the previous section, we highlighted how there can be an infinite family of latent-variable models that all achieve the same marginal likelihood. Here, we show how additional constraints present in the learning algorithm can restrict the space of solutions leading to uniqueness.

We focus on the impact of posterior regularization \citep{zhu2012bayesian, shu2018amortized} via the choice of variational family on the set of solutions to the $\beta$-VAE objective. In variational inference, we optimize not only the generative model, $p(x, z) = p(x|z)p(z)$, but also a variational distribution $q(z|x)$ to approximate the intractable posterior $p(z|x)$. When optimizing over $q(z|x)$, we constrain the distributions to come from some family $\mcal{Q}$. %

We first consider the case when the true posterior lies in the variational family $\mcal{Q}$. Although there are many latent variable generative models with the same marginal data distribution $p(x)$, this equivalence class of models shrinks when we constrain ourselves to models with posteriors $p(z|x) \in \mcal{Q}$. Consider applying a mixing transform $r$, with $|\text{det}\,J_r(z)|=1$, that preserves the prior distribution of $z$ before feeding it to the decoder: $z' = r(z)$. The transformed posterior is given by:
\begin{align}
\begin{split}
p'(z|x)& =\frac{p(x|r(z))p(z)}{p(x)} = \frac{p(x|z')p(r^{-1}(z'))}{p(x)} \\
& = \frac{p(x|z')p(z')}{p(x)} = p(z'|x) = p(r(z)|x),
\end{split}
\end{align}
where we used the facts that $p(z)=p(r(z))=p(r^{-1}(z))$ and $|\text{det}\, J_r(z)|=1$. 
This implies $p(z|x)=p'(r^{-1}(z)|x)$, \ie, the posterior r.v. is transformed by $r^{-1}$ yielding $p'_{z|x}=r^{-1} \# p_{z|x}$ \footnote{For clarity, we denote the transformed distribution using pushforward notation: where $f \# p$ denotes the distribution created by transforming samples from $p$ using the determinstic function $f$. %
}.
If $p_{z|x} \in \mcal{Q}$ but $p'_{z|x} \not\in \mcal{Q}$, then the best ELBO we can achieve in the transformed model $p'$ is strictly less than the untransformed model $p$. Thus even though $r$ leaves the marginal data likelihood unchanged, it yields a worse ELBO and thus can no longer be considered an equivalent model under the training objective. The question remains that for a distribution family $\mcal{Q}$ and a $q\in\mcal{Q}$, what class of transformations $R$ guarantee that $r^{-1} \# q \not \in Q$ for all $r \in R$? This has to be answered on a case-by-case basis, but we consider an example below.

\subsection{Example: Isotropic Gaussian prior and orthogonal transforms}
\label{sec:isoortho}
An isotropic Gaussian prior is invariant under orthonormal transformations (characterized by orthogonal matrices with unit norm columns). 
For a mean-field family of distributions $\mcal{Q}$, the necessary and sufficient conditions for $r^{-1}\# p_{z|x} \in \mcal{Q}$, given $p_{z|x} \in \mcal{Q}$, for all $r$ belonging to the set of orthonormal transformations, are %
\citep{skitovitch1953property,darmois1953analyse,lukacs1954property}: (i) $p_{z|x}$ is a factorized Gaussian, and (ii) variances of the individual latent dimensions of the posterior are all equal. %
If either of these two conditions are not satisfied, the orthogonal symmetry in the generative model, which arises from using the isotropic Gaussian prior, is broken. When all latent variances are unequal, the only orthogonal transforms $r$ allowed on the generative model will be the permutations. This Darmois-Skitovitch characterization of the Gaussian random vectors has also been used to prove identifiability results for linear ICA \citep{hyvarinen1999nonlinear}. However, in our context, we do not need any linearity assumption on the conditional generative model $p(x|z)$.

Probabilistic PCA \citep{tipping1999probabilistic} is one example in this model class with $p(z)=\mcal{N}(0,I),\, p(x|z)=\mcal{N}(Az,\sigma^2 I)$ and $p(x)=\mcal{N}(0, AA^\top +\sigma^2 I)$, that has a Gaussian posterior which is factored when $A$ has orthogonal columns. However, there are equivalent models  $p(x|z)=(AVz,\sigma^2 I)$ with $V$ orthonormal that yield the same $p(x)$ and still have a factorized posterior if the columns of $A$ have equal norms.  This symmetry vanishes only when all columns of $A$ have different norms (condition (ii) mentioned earlier).

\subsection{Uniqueness in $\beta$-VAE}
The choice of variational family can lead to uniqueness even when the true posterior is not in the variational family. Consider %
the $\beta$-VAE objective \citep{higgins2016beta}:
\begin{align}
L(q, p) := \expect_{q(z|x)} \log p_{x|z}(x|z) - \beta\, \kl(q(z|x)\Vert p_z(z)).
\label{eq:elbo_nonpara}
\end{align}
Note that we maximize the objective  w.r.t. only the decoder $p_{x|z}$ (not the prior $p_z$) but use the shorthand $L(q,p)$ in place of $L(q,p_{x|z})$ for brevity.
Let $\mcal{P}$ denote the set of all generative models with fixed prior $p_z$ but arbitrary decoder $p_{x|z}$. Let $R$ be the class of (deterministic) mixing transforms that leave the prior invariant, %
\ie, $p_z = r \# p_z$. Note that %
$r\in R \Rightarrow r^{-1}\in R$. Given a solution $(q^*,p^*)$, s.t., $(q^*,p^*) = \max_{q \in \mcal{Q}, p\in \mcal{P}} L(q, p)$, we would like to know %
if there is a transformational non-uniqueness associated with it, \ie, if it is possible to transform the generative model by $r$ and get the same marginal $p(x)$ as well as same objective value of \eqref{eq:elbo_nonpara}. %
The following result gives a condition under which this non-uniqueness is ruled out. 
\begin{theorem}[Proof in Appendix B] %
Let $R$ be the class of mixing transforms that leave the prior invariant, and $p_r$ be the generative model obtained by transforming the latents of the model $p$ by $r\in R$. %
Let $L(q^*, p^*) = \max_{q \in \mcal{Q}, p \in \mcal{P}} L(q, p)$.
Let $\mcal{\widetilde{Q}}$ be the completion of $\mcal{Q}$ by $R$, \ie, $\mcal{\widetilde{Q}}=\{r\#q: q\in\mcal{Q}, r\in R\}\cup \mcal{Q}$.  If $\mcal{Q}$ is s.t. $q \in \mcal{Q} \Rightarrow r\#q \not\in \mcal{Q}$ for all $r\in R$, and $\argmax_{q\in\mcal{\widetilde{Q}}}L(q,p^*)$ is unique, then $\max_{q \in \mcal{Q}} L(q, p_r^*) < L(q^*, p^*)$.
\label{thm:main}
\end{theorem}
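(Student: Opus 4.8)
The plan is to prove the strict inequality by first establishing an exact invariance of the ELBO under the joint action of a prior-preserving transform $r$ on both the model and the variational distribution, and then exploiting the joint optimality of $(q^*,p^*)$ together with the uniqueness hypothesis on $\mcal{\widetilde{Q}}$.

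First I would establish the identity $L(q, p_r) = L(r\#q, p)$ for every $q$ and every $r\in R$. Since the decoder of $p_r$ is $p_{x|z}(x\mid r(z))$, substituting $z' = r(z)$ in the reconstruction term turns $\expect_{q(z|x)}\log p_{x|z}(x\mid r(z))$ into $\expect_{(r\#q)(z'|x)}\log p_{x|z}(x\mid z')$; this substitution is exact because $|\det J_r| = 1$. For the divergence term I would use invariance of the KL divergence under the invertible map $r$ together with prior invariance $r\#p_z = p_z$, giving $\kl(q(z|x)\Vert p_z) = \kl(r\#q\Vert r\#p_z) = \kl(r\#q\Vert p_z)$. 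Combining the two pieces yields the identity.

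Next I would show that $q^*$ is in fact the unique maximizer of $L(\cdot, p^*)$ over the enlarged family $\mcal{\widetilde{Q}}$, not merely over $\mcal{Q}$. Take any $q'\in\mcal{\widetilde{Q}}$. If $q'\in\mcal{Q}$, then $L(q', p^*)\le \max_{q\in\mcal{Q}}L(q,p^*) = L(q^*,p^*)$ directly. If instead $q' = r'\#q_0$ with $q_0\in\mcal{Q}$ and $r'\in R$, then the invariance identity read in reverse gives $L(q',p^*) = L(r'\#q_0,p^*) = L(q_0, p^*_{r'})$, and since $p^*_{r'}\in\mcal{P}$ (it has prior $p_z$ and a valid, transformed decoder), this is at most $\max_{q\in\mcal{Q},\,p\in\mcal{P}}L(q,p) = L(q^*,p^*)$. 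Hence $L(q',p^*)\le L(q^*,p^*)$ for all $q'\in\mcal{\widetilde{Q}}$, so $q^*$ attains the maximum over $\mcal{\widetilde{Q}}$, and by the uniqueness hypothesis it is the unique argmax. This step, which crucially uses that $\mcal{P}$ ranges over \emph{arbitrary} decoders so that transformed variational distributions can be traded for transformed models, is where I expect the main obstacle, since it is the only place the full joint optimality of $(q^*,p^*)$ over both $\mcal{Q}$ and $\mcal{P}$ is needed.

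Finally I would apply the invariance identity once more to the target quantity, writing $\max_{q\in\mcal{Q}}L(q, p_r^*) = \max_{q\in\mcal{Q}}L(r\#q, p^*)$. Let $q_\dagger\in\mcal{Q}$ attain this maximum (attainment is implicit in the $\max$ notation). By the standing hypothesis $q_\dagger\in\mcal{Q}\Rightarrow r\#q_\dagger\notin\mcal{Q}$, so $r\#q_\dagger\ne q^*$ because $q^*\in\mcal{Q}$, while $r\#q_\dagger\in\mcal{\widetilde{Q}}$. Since $q^*$ is the unique maximizer over $\mcal{\widetilde{Q}}$ and $r\#q_\dagger$ is a different point of $\mcal{\widetilde{Q}}$, we conclude $L(r\#q_\dagger, p^*) < L(q^*,p^*)$, which is exactly $\max_{q\in\mcal{Q}}L(q, p_r^*) < L(q^*,p^*)$, as claimed.
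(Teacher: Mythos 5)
Your proposal is correct and follows essentially the same route as the paper's proof: both hinge on the invariance identity $L(q,p_r)=L(r\#q,p)$ (which you derive by splitting into the reconstruction and KL terms rather than a single change of variables) and then invoke uniqueness of the argmax over $\mcal{\widetilde{Q}}$ to rule out $r\#q$ matching the optimum. The only stylistic differences are that you argue directly rather than by contradiction and spell out explicitly why $q^*$ attains the maximum over $\mcal{\widetilde{Q}}$, a step the paper states more tersely.
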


\begin{corollary}[Proof in Appendix B] %
If the set $\mcal{\widetilde{Q}}$ defined in Theorem~\ref{thm:main} is convex, then $\argmax_{q\in\mcal{\widetilde{Q}}}L(q,p^*)$ is unique. When all other assumptions of Theorem~\ref{thm:main} are met, we have $\max_{q \in \mcal{Q}} L(q, p_r^*) < L(q^*, p^*)$.
\end{corollary}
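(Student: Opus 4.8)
The plan is to prove the two claims of the corollary in sequence: first that convexity of $\widetilde{\mcal{Q}}$ forces $\argmax_{q \in \widetilde{\mcal{Q}}} L(q, p^*)$ to be a singleton, and then simply to invoke Theorem~\ref{thm:main}, all of whose remaining hypotheses are assumed to hold, to obtain the strict inequality. The substantive work is entirely in the first claim, since the second is a direct application of the theorem once uniqueness is secured.

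For the first claim I would establish that $q \mapsto L(q, p^*)$ is \emph{strictly concave} on the space of variational distributions, a property intrinsic to the objective and independent of $\widetilde{\mcal{Q}}$. Writing out \eqref{eq:elbo_nonpara} with the decoder $p^*_{x|z}$ fixed, the reconstruction term $\expect_{q(z|x)} \log p^*_{x|z}(x|z) = \int q(z|x) \log p^*_{x|z}(x|z)\, dz$ is \emph{linear} in $q$, being an expectation of a $q$-independent integrand. For the penalty I would split the divergence as $\kl(q \Vert p_z) = \int q \log q\, dz - \int q \log p_z\, dz$: the second piece is again linear in $q$, while the first is the negative differential entropy, which is strictly convex in $q$ because $t \mapsto t \log t$ is strictly convex on $(0,\infty)$. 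Hence $\kl(\cdot \Vert p_z)$ is strictly convex in $q$, and since $\beta > 0$ the term $-\beta\,\kl(q \Vert p_z)$ is strictly concave; adding the linear reconstruction term preserves strict concavity.

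With strict concavity in hand the uniqueness argument is standard: a strictly concave function has at most one maximizer over a convex set, since if $q_0 \neq q_1$ were both maximizers then the midpoint $\tfrac{1}{2}(q_0 + q_1)$, which lies in $\widetilde{\mcal{Q}}$ by convexity, would strictly exceed the common maximal value, a contradiction. Because the corollary presupposes that the maximum is attained (it refers to $\argmax$), this yields a unique maximizer and thus verifies the uniqueness hypothesis of Theorem~\ref{thm:main}. The second claim then follows immediately: with $\argmax_{q \in \widetilde{\mcal{Q}}} L(q, p^*)$ unique and all other assumptions of the theorem in force, Theorem~\ref{thm:main} gives $\max_{q \in \mcal{Q}} L(q, p_r^*) < L(q^*, p^*)$.

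I expect the main (essentially the only) obstacle to be verifying \emph{strict} convexity of the KL term as a functional of $q$, as opposed to mere convexity. Ordinary convexity is immediate from Jensen, but strictness requires that a nontrivial mixture of two \emph{distinct} densities strictly decrease $\int q \log q$; I would handle this via the pointwise strict convexity of $t \log t$ together with the observation that $q_0 \neq q_1$ as distributions means they differ on a set of positive measure, so the strict pointwise inequality survives integration. A secondary technical point to keep clean is whether $L$ is defined per datapoint or as an average over $x$; in the latter case strict concavity is still preserved, because a sum or integral of strictly concave functions over a convex domain remains strictly concave, and the argument goes through unchanged.
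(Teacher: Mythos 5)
Your proposal is correct and follows essentially the same route as the paper's proof: decompose the objective into a reconstruction term that is linear in $q$ plus $-\beta$ times the KL term, observe that the KL term is (strictly, in your case; strongly, in the paper's, via cited results) convex in $q$, and conclude uniqueness of the maximizer over the convex set $\mcal{\widetilde{Q}}$ before invoking Theorem~\ref{thm:main}. The only difference is that you derive strict convexity of $q \mapsto \int q\log q$ from the pointwise strict convexity of $t\log t$ rather than citing strong convexity of the KL divergence, which is a perfectly adequate (and self-contained) substitute since strict concavity already yields uniqueness.
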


\begin{figure}[t]
\centering
\includegraphics[width=0.4\textwidth]{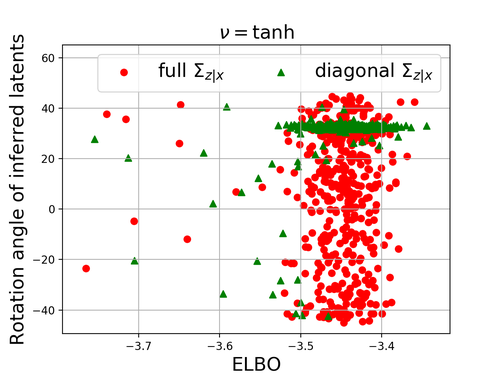}
\caption{Rotation angle between true latents and inferred latents for  VAEs (multiple runs): %
diagonal $\Sigma$ encourages unique solution in terms of inferred latents as the ELBO increases. Please refer to text for details. More experiments are in Appendix C. %
}
\label{fig:ppcaexp}
\end{figure}

Hence, for a choice of variational family $\mcal{Q}$ such that the tuple $(\mcal{Q}, R)$ satisfies the condition of Theorem~\ref{thm:main}, transforming the latents by $r\in R$ will result in decreasing the optimal value of the $\beta$-VAE objective, thus breaking the non-uniqueness. 
From Section \ref{sec:isoortho}, we know that using a Gaussian prior, and restricting $\mathcal{Q}$ to factorized \emph{non-Gaussian} distributions is one such choice that ensures a unique $(q^*, p^*)$ when $R$ is the class of orthogonal transformations. Thus the choice of prior and variational family can help to guarantee uniqueness of models trained with the $\beta$-VAE objective w.r.t. transformations in $R$.
We conduct an experiment highlighting this empirically. 
We generate $4k$ samples in two dimensions using $x = \nu(Wz) + \epsilon$, with $z\sim\mcal{N}(0,I), \epsilon\sim\mcal{N}(0,.05^2I)$. Entries of $W\in\reals^{2\times 2}$ are sampled from $\mcal{N}(0,1)$ and normalized such that columns have different $\ell_2$ norms (2 and 1, respectively). $\nu$ is either Identity (Probabilistic PCA) or a nonlinearity (tanh, sigmoid, elu). A VAE is trained on this data, with decoder of the form $x = \nu(Az)+b$ and variational posterior of the form $q(z|x)=\mcal{N}(Cx+d,\Sigma)$ with $A,b,C,d$ and $\Sigma$ learned. We do multiple random runs of this experiment and measure the rotation angle between true latents and inferred posterior means (after ignoring permutation and reflection of axes so the angle ranges in $(-45,45]$).
These runs are trained with a varying number of iterations so they reach solutions with varying degrees of suboptimality (ELBO).  Fig.~\ref{fig:ppcaexp} shows a scatter plot of rotation angle vs ELBO for $\nu$=tanh. When $\Sigma$ is diagonal (mean-field $q(z|x)$), there is a unique solution in terms of inferred latents as the ELBO increases (ignoring the permutations and reflections of latent axes). In Appendix C,  %
we show this uniqueness is not due to additional inductive biases in the optimization algorithm, and provide results for unamortized VI and other activations $\nu$.

{\bf Disentanglement.~}
\citet{locatello2019challenging, mathieu2019disentangling} highlight that learning disentangled representations using only unlabeled data is impossible without additional inductive biases due to this non-uniqueness issue of multiple generative models resulting in same $p(x)$. 
While \citet{mathieu2019disentangling} detail how the $\beta$-VAE objective is invariant to rotation of the latents with a flexible %
variational distribution, 
they do not consider the role mean-field plays in limiting the set of solutions.
Based on the discussion in this section, it can be seen that the choice of variational family can be one such inductive bias, \ie, it can be chosen in tandem with the prior to rule out issues of non-uniqueness due to certain transforms. Most existing works on unsupervised disentanglement using $\beta$-VAE's \citep{higgins2016beta,kumar2017variational,chen2018isolating,kim2018disentangling,burgess2018understanding}
use a standard normal prior and factored Gaussian variational family which should not have this orthogonal non-uniqueness issue (\ie, the construction in the proof of Theorem 1 of \citep{locatello2019challenging} will change the ELBO), unless two or more variational %
distribution covariances are equal. 
Note that this observation is limited to the theoretical uniqueness aspect of the models and does not contradict the empirical findings of \citet{locatello2019challenging} who observe that $\beta$-VAEs converge to solutions with different disentanglement scores depending on the random seeds. There could be multiple reasons for this, such as models differing in their training objective values (\ie, suboptimal solutions where our theoretical results do not apply), disentanglement metrics being sensitive to non-mixing transforms or the finite dataset size. We leave further investigation of these empirical aspects %
to future work.

\section{Deterministic approximations of $\beta$-VAE}
\label{sec:detapprox}
Next, we analyze how the covariance structure of the variational family can regularize the geometry of the learned generative model.
We start with the $\beta$-VAE objective of Eq.~\eqref{eq:elbo_nonpara}, which reduces to the ELBO for $\beta=1$. 
We use $p_\theta(x|z)$ to denote the decoding model parameterized by $\theta$, and $q_\phi(z|x)$ to denote  the amortized inference model (encoding model) parameterized by $\phi$. %
We will sometimes omit the parameter subscripts for brevity. We explicitly consider the case where %
the first two moments exist for the  variational %
distribution $q_\phi(z|x)$. 
We denote the latent dimensionality by $d$ and the observation dimensionality by $D$.

Let us denote $f_x(z)=\log p(x|z)$ and view $\log p(x|z)$ as a scalar function of $z$ for a given $x$. The second-order Taylor series expansion of $f_x(z)$ around the first moment (mean)  $\mu_{z|x} = \expect_{q_{\phi(\cdot|x)}}\left[z\right] = h_\phi(x)$ %
is given by:
\begin{align}
\begin{split}
f_x(z) \approx & \log p(x|h_\phi(x)) +  J_{f_x}(h_\phi(x))(z-h_\phi(x))\\  
& + \frac{1}{2}(z-h_\phi(x))^\top H_{f_x}(h_\phi(x))(z-h_\phi(x)), \end{split} 
\label{eq:taylor}
\end{align}
\noindent where $J_{f_x}(z)\in \reals^{1\times d}$ and $H_{f_x}(z)\in\reals^{d\times d}$ are the Jacobian and Hessian of $f_x$ evaluated at $z$. Note that the expectation of the second term under $q_{\phi(z|x)}$ is zero as $\expect_{q_{\phi(\cdot|x)}}\left[z\right] = h_\phi(x)$. Substituting the approximation \eqref{eq:taylor} in the original $\beta$-VAE objective  \eqref{eq:elbo_nonpara}, we obtain the following approximation 
\begin{align}
\begin{split}
L(p, q) \approx \log p_\theta(x|h_\phi(x)) & + \frac{1}{2}\tr( H_{f_x}(h_\phi(x))\Sigma_{z|x}) \\
& - \beta\, \kl(q_\phi(z|x)\Vert p(z)),
\end{split}
\label{eq:elbo_approx}
\end{align}
\noindent where $\Sigma_{z|x}=\expect_{q(z|x)}[(z-\mu_{z|x})(z-\mu_{z|x})^\top]$ is the %
covariance of the variational distribution. 
For this approximation to be accurate, we either need the higher central moments of the variational distribution to be small or the higher-order derivatives  $\nabla^n_z \log p(x|z)|_{z=h_\phi(x)}$ ($n\geq 3$) to be small.
For small values of $\beta$, the variational distribution %
will be  fairly concentrated around the mean, resulting in small higher central moments (if any). While this objective no longer requires sampling, it involves a Hessian of the decoder, which can be computationally expensive to evaluate. Next we show how this Hessian can be computed tractably given certain network architectures.

Let $g_\theta:Z\to \reals^D$ denote the determinstic component of the decoder that maps from a latent vector $z$ to the parameters of the observation model (\eg, to means of the Gaussian pixel observations, to probabilities of the Bernoulli distribution for binary images, etc.). %
We can further expand the Hessian $H_{f_x}(z)$ to decompose it as the sum of  two terms, one involving the Jacobian of the decoder mapping $J_g(z)\in\reals^{D\times d}$, and the other involving the Hessian of the decoder mapping, as follows:
\begin{align}
\begin{split}
H_{f_x}(z) =\, & \nabla_z^2 \log p(x|z) = \nabla_z^2 \log p(x; g(z)) \\
 =\, & \nabla_z [  (\nabla_{g(z)}\log p(x;g(z)))J_g(z)] \\
=\, & J_g(z)^\top (\nabla_{g(z)}^2 \log p(x;g(z))) J_g(z) \,+ \\ 
 &(\nabla_{g(z)}\log p(x;g(z)))\nabla_z^2 g(z)
\end{split}
\label{eq:hess_decomp}
\end{align}
It can be noted that $\nabla_z^2 g(z)$ in the second term is almost always zero when the decoder network uses only  piecewise-linear activations (\eg, relu, leaky-relu), which is the case in many implementations. Denoting $H_{p_x}(g(z))=\nabla_{g(z)}^2 \log p(x;g(z))$, this leaves us with 
\begin{align}
H_{f_x}(z) = J_g(z)^\top H_{p_x}(g(z)) J_g(z)
\label{eq:hess_approx}
\end{align}
In the case of other nonlinearities, we can make the approximation that $\nabla_z^2 g(z)$ is small and can be neglected. Substituting the Hessian $H_{f_x}(z)$ from Eq.~\eqref{eq:hess_approx} into the approximate ELBO \eqref{eq:elbo_approx}, we obtain
\begin{align}
\begin{split}
\max_{g,h} \,\,& \log p(x|h(x)) - \beta\, \kl(q_\phi(z|x)\Vert p(z))\, + \\ &\frac{1}{2}\tr(J_g(h(x))^\top H_{p_x}(g(h(x))) J_g(h(x)) \Sigma_{z|x})
\end{split} 
\label{eq:elbo_approx2}
\end{align}

The above deductions in \eqref{eq:elbo_approx} and
\eqref{eq:elbo_approx2}, obtained using the second order approximation of $\log p(x|z)$, imply that stochasticity of the variational distribution %
translates to regularizers that encourage the \emph{alignment} of $H_{f_x}(h(x))$ and $J_g(h(x))^\top H_{p_x}(g(h(x))) J_g(h(x))$ with %
covariance $\Sigma_{z|x}$ by maximizing their inner product with it. Note that for observation models that independently model stochasticity at individual pixels (\eg, pixel-wise independent Gaussian, Bernoulli, etc.), the matrix $H_{p_x}(g(h(x)))$ is diagonal and thus efficient to compute. In particular, for independent standard Gaussian observations (\ie, $p(x;g(z))=\mcal{N}(g(z),I)$), $H_{p_x}(g(h(x)))=-I$, the identity matrix of size $D$. 

\section{Impact of covariance structure of $q$}
The determinstic approximation to the $\beta$-VAE objective allows us to study the relationship between the decoder Jacobian and %
the covariance of the variational distribution. Lets consider the specific case of a standard Gaussian prior, $p(z)=\mcal{N}(0,I)$, and Gaussian variational family, $q_\phi(z|x)=\mcal{N}(h_\phi(x),\Sigma_{z|x})$. The objective in \eqref{eq:elbo_approx} reduces to
\begin{align}
\begin{split}
\log p_\theta(x|h_\phi(x)) + \frac{1}{2}\tr( H_{f_x}(h_\phi(x))\Sigma_{z|x})\, - \\ 
\frac{\beta}{2} \left(\lVert h_\phi(x)\rVert^2 + \tr(\Sigma_{z|x}) - \log |\Sigma_{z|x}| - d\right),
\end{split}
\label{eq:elboapprox_gauss}
\end{align}
\noindent where $|\Sigma_{z|x}|$ denotes the absolute value of the determinant of $\Sigma_{z|x}$. We consider the case when the mean of the variational distribution %
is parameterized by a neural network but the covariance is not amortized. The objective \eqref{eq:elboapprox_gauss} is concave in $\Sigma_{z|x}$, and maximizing it w.r.t. $\Sigma_{z|x}$ gives
\begin{align}
\begin{split}
 & \frac{1}{2}H_{f_x}(h_\phi(x)) - \frac{\beta}{2} \left(I - \Sigma_{z|x}^{-1}\right) = 0 \\
\Longrightarrow & \quad \Sigma_{z|x} = \left(I - \frac{1}{\beta}H_{f_x}(h_\phi(x)) \right)^{-1}
\end{split}
\label{eq:covhess}
\end{align}
Substituting Eq. \eqref{eq:covhess} into \eqref{eq:elboapprox_gauss}, we get the maximization objective of
\begin{align}
\begin{split}
\max \,\,  \log p_\theta(x|h_\phi(x)) & - \frac{\beta}{2} \lVert h_\phi(x)\rVert^2 \, - \\
& \frac{\beta}{2} \log \left|I- \frac{1}{\beta} H_{f_x}(h_\phi(x))\right|.
\end{split}
\end{align}
Following similar steps for the objective \eqref{eq:elbo_approx2} gives the optimal %
covariance of
\begin{align}
\begin{split}
  \Sigma_{z|x} = \left(I - \frac{1}{\beta}J_g(h(x))^\top H_{p_x}(g(h(x))) J_g(h(x)) \right)^{-1},
\end{split}
\label{eq:covjacob}
\end{align}
\noindent and the maximization objective of 
\begin{align}
\begin{split}
&\hspace{-3mm}\log p_\theta(x|h_\phi(x))  - \frac{\beta}{2} \lVert h_\phi(x)\rVert^2 \, - \\
&\hspace{-3mm}\frac{\beta}{2} \log \left|I- \frac{1}{\beta} J_g(h_\phi(x))^\top H_{p_x}(g_\theta(h_\phi(x))) J_g(h_\phi(x))\right|
\end{split}
\label{eq:elbo_gauss_optcov}
\end{align}
This analysis yields an %
interpretation of the $\beta$-VAE objective as a deterministic reconstruction term ($p_\theta(x|h_\phi(x))$, a regularizer on the encodings ($\lVert h_\phi(x)\rVert^2$), and a regularizer involving the Jacobian of the decoder mapping.

\subsection{Regularization effect of the covariance}
The relation between the optimal $\Sigma_{z|x}$ and the Jacobian of the decoder $J_g(z)$ in Eq.~\eqref{eq:covjacob} uncovers how the choice of the structure of $\Sigma_{z|x}$ influences the Jacobian of the decoder. For example, having a diagonal structure on the covariance matrix (mean-field variational distribution) %
should drive the matrix $(J_g(h(x))^\top H_{p_x}(g(h(x))) J_g(h(x)))$ towards a diagonal matrix. As mentioned earlier, for observation models that independently model the individual pixels given the decoding distribution parameters $g(z)$, the matrix $H_{p_x}(g(z))$ is diagonal which implies that the matrix $|H_{p_x}(g(h(x)))|_{\text{abs}}^{1/2}J_g(h(x))$ is encouraged to have orthogonal columns, where $|H_{p_x}(g(h(x)))|_{\text{abs}}$ denotes taking elementwise absolute values of the matrix $H_{p_x}(g(h(x)))$. %

For standard Gaussian observations, we have $H_{p_x}(g(z))=-I$ for all $z$, implying that the decoder Jacobian $J_g(h(x))$ will be driven towards having orthogonal columns. 
For Bernoulli distributed pixel observations, the $i$'th diagonal element of matrix $H_{p_x}(g(h(x)))$ is given by $-1/(1-x_i-[g(h(x))]_i)^2$, where $x_i\in\{0,1\}$ is the $i$'th pixel and $[g(h(x))]_i$ is the predicted probability of $i$'th pixel being $1$. When the decoding network has the capacity to fit the training data well, \ie, $(1-x_i-[g(h(x))]_i)^2$ is close to $1$ for all $i$, the Jacobian matrix $J_g(h(x))$ will still be driven towards having orthogonal columns. %
Orthogonality of the Jacobian has been linked with semantic disentanglement and regularizers to encourage orthogonality have been proposed to that end \cite{ramesh2019spectral}. Our result shows that diagonal covariance of the variational distribution naturally encourages orthogonal Jacobians. 
As is evident from Eq.~\eqref{eq:covjacob}, more interesting structures on the decoder Jacobian matrix can be encouraged by placing appropriate sparsity patterns on $\Sigma_{z|x}^{-1}$. 
Often it may be expensive to encode constraints on the decoder Jacobian directly, as that would typically require computing the Jacobian to apply the constraint. Instead, this relationship highlights that we can impose soft constraints on the decoder Jacobian by restricting the covariance structure for the variational family.

\subsection{Connection with Riemannian metric}
For Gaussian observations, Eq. \eqref{eq:covjacob} reduces to  
$\Sigma_{z|x} = \left(I + \frac{1}{\beta}J_g(h(x))^\top  J_g(h(x)) \right)^{-1}$. The decoder,  $g:Z\to \reals^D$, outputs the mean of the distribution and its image $M=g(Z)\subset \reals^D$
is an embedded differentiable manifold\footnote{This assumes that the decoder uses differentiable nonlinearities which does not hold for piecewise-linear activations such as relu, leaky-relu. However, in principle, they can always be approximated arbitrarily closely by a smooth function for the sake of this argument.} of dimension $d$, when $J_g(z)$ is full rank for all $z$. 
If we inherit the Euclidean geometric structure from $\reals^D$, restricting the inner product to the tangent spaces $T_x M$ gives us a Riemannian metric on $M$. The pullback of this metric to $Z$ gives the corresponding Riemannian metric on $Z$, which is given by a symmetric positive-definite matrix field $G(z)=J_g(z)^\top J_g(z)$. Recent work has used this metric to study the geometry of deep generative models \citep{shao2018riemannian,chen2017metrics,arvanitidis2017latent,kuhnel2018latent}. Given the relation in Eq. \eqref{eq:covjacob}, we can think of variational distribution covariance %
as indirectly inducing a metric on the latent space for Gaussian observations as $G(z)=J_g(z)^\top J_g(z)=\beta(\Sigma_{z|x}^{-1}-I)$ for $z=h(x)$. It will be interesting to investigate the metric properties of $\Sigma_{z|x}^{-1}$ in the context of deep generative models, \eg, for geodesic traversals, curvature calculations, etc. \citep{shao2018riemannian,chen2017metrics}. 
It can be noted that training fixed covariance VAEs,\ie, with $\Sigma_{z|x}$ being same for all $x$, encourages learning of flat manifolds. Interestingly, learning flat manifolds has been explicitly encouraged in StyleGAN2 \citep{karras2020analyzing} and in recent work on VAEs \citep{kato2020rate,chen2020learning}, particularly with orthogonal Jacobian matrix, which will be implicitly encouraged in VAEs if we fix $\Sigma_{z|x}$ to be diagonal and same for all $x$. 

\subsection{Training deterministic approximations of $\beta$-VAE}
Beyond the theoretical understanding of the implicit regularization in $\beta$-VAE, it should also be possible to use the proposed objectives for training the models, particularly when the observation dimensions are conditionally independent as the Hessian $H_{p_x}(\cdot)$ is diagonal. This will also help in investigating the accuracy and utility of our deterministic approximations to the $\beta$-VAE objective.
We consider the case of Gaussian observation model for simplicity as the matrix $H_{p_x}(g(z))=-I$, reducing the objective to: 
\begin{align}
\begin{split}
\min_{g,h} \,\, \frac{1}{2}\lVert x- & g(h(x)) \rVert^2  + \frac{\beta}{2} \lVert h(x)\rVert^2 \, + \\
& \frac{\beta}{2} \log \left|I+ \frac{1}{\beta} J_g(h(x))^\top  J_g(h(x))\right|.
\end{split}
\label{eq:elbo_gauss_obs}
\end{align}
We refer to the objective \eqref{eq:elbo_gauss_obs} as Gaussian Regularized AutoEncoder ({\bf GRAE}).
However, for large latent dimensions, computing full Jacobian matrices can substantially increase the training cost. 
To reduce this cost, we propose training with a stochastic approximation of a upper bound of the regularizers in objectives \eqref{eq:elbo_gauss_optcov} and \eqref{eq:elbo_gauss_obs}.  We use Hadamard inequality for determinants of positive definite matrices $A$, \ie, $\text{det}(A)\leq \prod_i A_{ii}$ with equality for diagonal $A$, and approximate the  regularizer of \eqref{eq:elbo_gauss_obs} as:
\begin{align}
\begin{split}
\log & \left|I+ \frac{1}{\beta} J_g(h(x))^\top  J_g(h(x))\right| \\ 
& \leq \sum_i \log \left(1+\frac{1}{\beta} \big\lVert[J_g(h(x))]_{:i}\big\rVert_2^2\right) 
\end{split}
\label{eq:ubreg}
\end{align}
Let $p_c$ be a discrete distribution on the column indices of the Jacobian matrix (\eg, uniform distribution). We can approximate the summation by sampling $k$ column indices $\{c_i\}_1^k$ from the distribution $p_c$ and using $\frac{1}{k}\sum_{i=1}^k \frac{1}{p_c(c_i)}\log \left(1+\frac{1}{\beta} \big\lVert[J_g(h(x))]_{:c_i}\big\rVert_2^2\right)$. Lacking any more information, we simply use a uniform distribution for $p_c$, and $k=1$ in all our experiments. We refer to the objective with this more tractable regularizer as {\bf GRAE}$_{\bm{\approx}}$. 
Similar steps can be followed for the regularizer in \eqref{eq:elbo_gauss_optcov}.
The upper bound in  \eqref{eq:ubreg} should be tight for mean-field variational distributions %
by virtue of the relation in Eq.~\eqref{eq:covjacob}.
For such cases (\ie, sparse %
precision matrices for variational distributions, including, diagonal), we can impose an additional orthogonal penalty on the corresponding Jacobian columns (in line with Eq. \eqref{eq:covjacob}) by penalizing their normalized dot products. This can also be implemented in stochastic form by sampling a pair of Jacobian columns. %

It is possible to relate the regularizer in \eqref{eq:elbo_gauss_obs}
with earlier work on regularized autoencoders. 
We can use the Taylor series for $\log (1+x)$ around $x=0$ to approximate the regularizer in \eqref{eq:elbo_gauss_obs}, which converges when the singular values of $J_g(h(x))$ are smaller than $\sqrt{\beta}$. 
Restricting the Taylor series to just first order term yields the simplified regularizer of $\frac{1}{2}\lVert J_g(h(x))\rVert_F^2$ (note that this approximation is crude:  $\beta$ cancels out and does not influence the decoder regularizer in any way). 
Similar approximations can be obtained for \eqref{eq:elbo_gauss_optcov} which will yield the regularizer of $\lVert J_g(h(x))[-H_{p_x}(g(h(x)))]^{1/2}\rVert_F^2$. This simplified regularizer of $\frac{1}{2}\lVert J_g(h(x))\rVert_F^2$ weighted by a free hyperparameter has been used in a few earlier works with autoencoders, including recently in \citep{ghosh2019variational}, where it was motivated in a rather heuristic manner for encouraging smoothness. On the other hand, we are specifically motivated by uncovering the regularization implicit in $\beta$-VAE and the regularizer in \eqref{eq:elbo_gauss_obs}  emerges as part of this analysis. 

\begin{figure*}[t]
\centering
\subfigure[$\beta=0.2,d=8,b=2$]{\label{fig:blksz2_a}\includegraphics[width=0.11\textwidth]{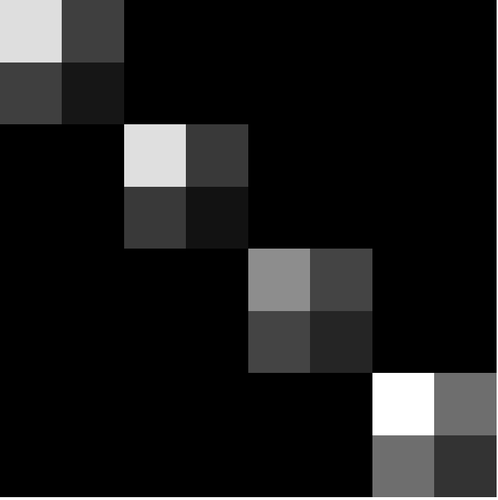} 
\includegraphics[width=0.11\textwidth]{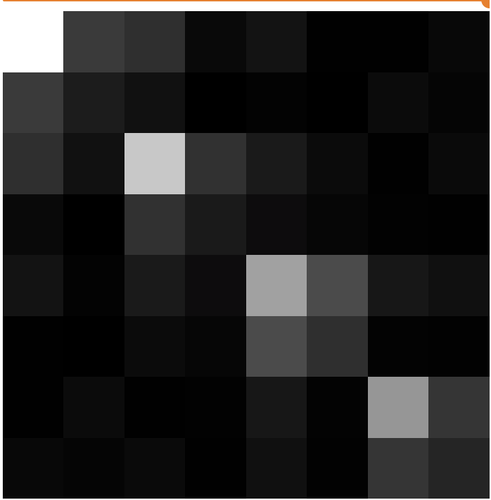}}~~~~~
\subfigure[$\beta=0.4,d=8,b=2$]{\label{fig:blksz2_b}\includegraphics[width=0.11\textwidth]{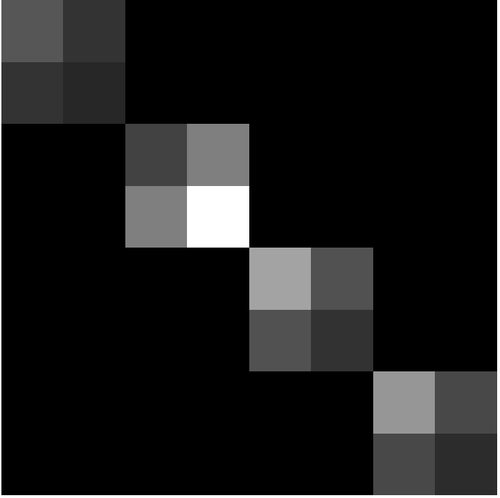} 
\includegraphics[width=0.11\textwidth]{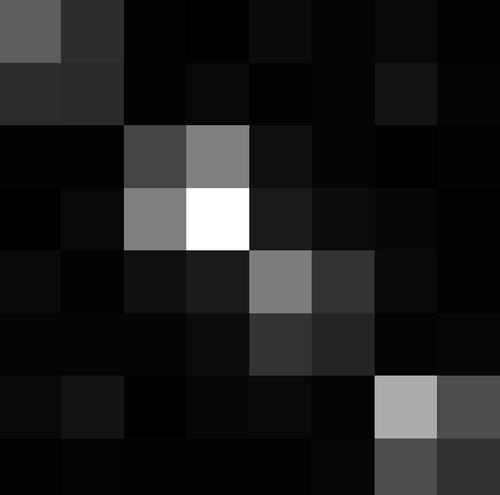}}~~~~~
\subfigure[$\beta=0.6,d=8,b=3$]{\label{fig:blksz3_b}\includegraphics[width=0.11\textwidth]{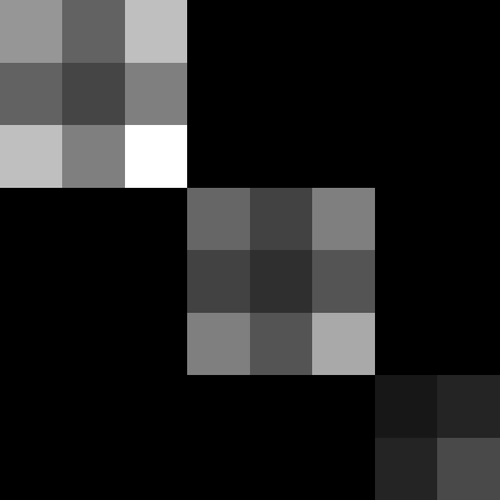}
\includegraphics[width=0.11\textwidth]{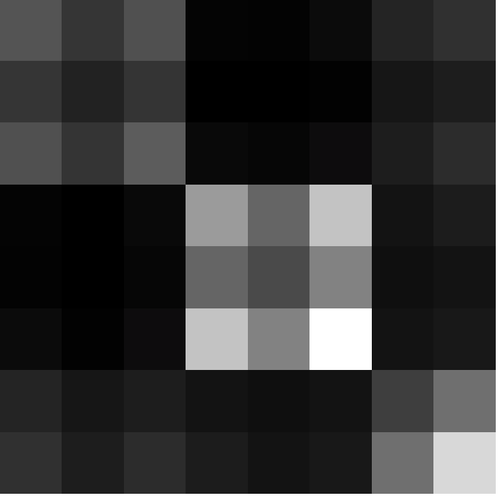}}~~~~~~
\subfigure[$\beta=0.8,d=20,b=3$]{\label{fig:blksz3_c}\includegraphics[width=0.11\textwidth]{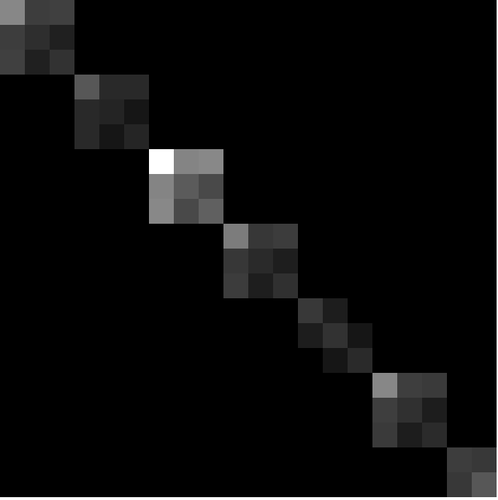}
\includegraphics[width=0.11\textwidth]{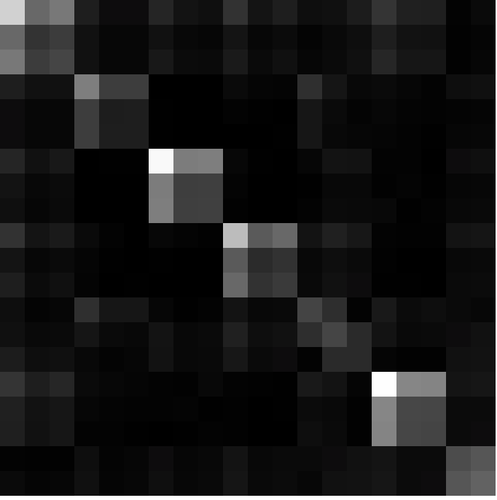}}
\caption{MNIST: Precision matrices $\Sigma_{z|x}^{-1}$ of the variational distributions (left) and the corresponding $J_g(h(x))^\top J_g(h(x))$ (right), for different values of $\beta$ and latent dimensionality $d$. Block size $b$ for the covariance was taken to be 2 or 3. More plots are shown in Appendix E.} %
\label{fig:blksz23_mnist}
\end{figure*}

\begin{figure*}[!htbp]
\centering
\hspace{-4mm}\includegraphics[width=0.27\textwidth]{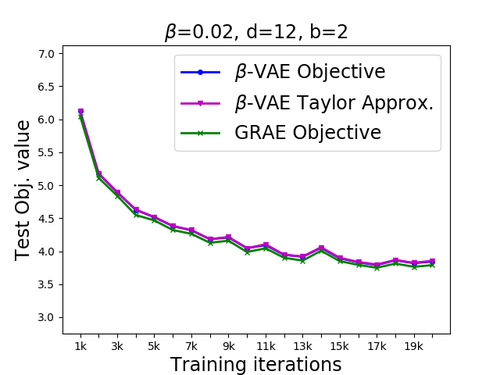}\hspace{-4mm}\includegraphics[width=0.27\textwidth]{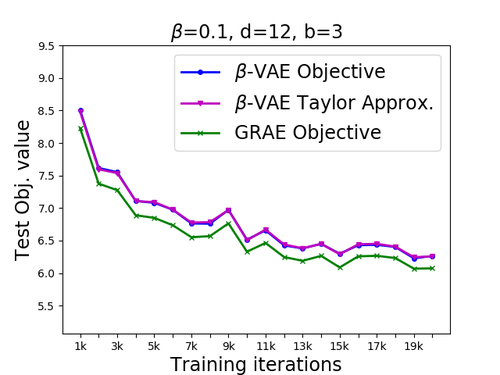}\hspace{-4mm}\includegraphics[width=0.27\textwidth]{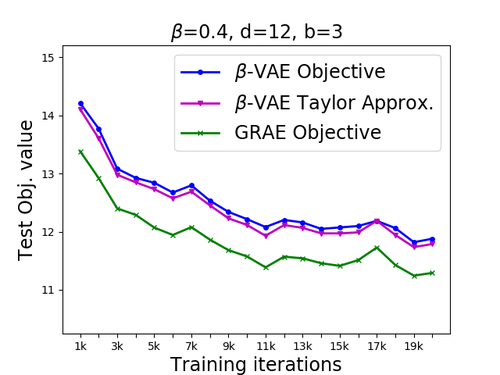}\hspace{-6mm}\includegraphics[width=0.27\textwidth]{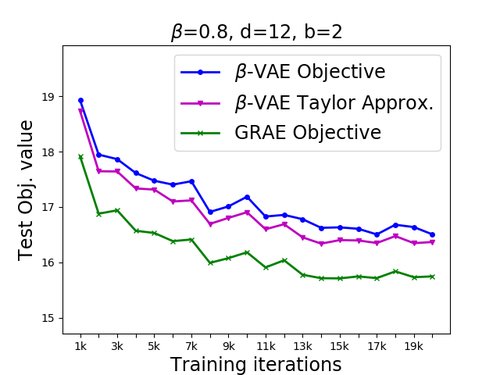}
\caption{MNIST: Comparison of objective values of $\beta$-VAE, its Taylor approximation (Eq. \eqref{eq:elbo_approx2}), and GRAE (Eq. \eqref{eq:elbo_gauss_obs}) for different values of $\beta$, %
and different values of block size $b$ for %
covariance matrix of variational distributions. Latent dimensionality $d$ is fixed to 12. More plots can be viewed in Appendix F.} %
\label{fig:compare_obj_mnist}
\end{figure*}

\remove{
\begin{figure*}[!htbp]
\centering
\hspace{-4mm}\includegraphics[width=0.27\textwidth]{figures/compare_obj2/mnist/62-beta=0.02,lr=0.001,block_diag_size=2,latent_dim=12,layer_width=64__elbo_approx_compare.npz.png}\hspace{-4mm}\includegraphics[width=0.27\textwidth]{figures/compare_obj2/mnist/212-beta=0.1,lr=0.001,block_diag_size=3,latent_dim=12,layer_width=64__elbo_approx_compare.npz.png}\hspace{-4mm}\includegraphics[width=0.27\textwidth]{figures/compare_obj2/mnist/255-beta=0.4,lr=0.001,block_diag_size=1,latent_dim=12,layer_width=96__elbo_approx_compare.npz.png}\hspace{-6mm}\includegraphics[width=0.27\textwidth]{figures/compare_obj2/mnist/332-beta=0.8,lr=0.001,block_diag_size=2,latent_dim=12,layer_width=64__elbo_approx_compare.npz.png}\hspace{-2mm}
\caption{MNIST: Comparison of objective values of $\beta$-VAE and GRAE (Eq. \eqref{eq:elbo_gauss_obs}) for different values of $\beta$, latent dimensionality $d$, and block size $b$ for posterior covariance matrix.}%
\label{fig:compare_obj_mnist}
\end{figure*}
}

\remove{
\begin{figure*}[!htbp]
\centering
\subfigure[$\beta=0.1,d=12,b=1$]{\label{fig:compobj_a}\includegraphics[width=0.33\textwidth]{figures/compare_obj2/2-beta=0.1,latent_dim=12,layer_width=64,block_diag_size=1,lr=0.001__elbo_approx_compare.png}} \hspace{-3mm}
\subfigure[$\beta=0.2,d=8,b=2$]{\label{fig:compobj_b}\includegraphics[width=0.33\textwidth]{figures/compare_obj2/9-beta=0.2,latent_dim=8,layer_width=64,block_diag_size=2,lr=0.001__elbo_approx_compare.png}}\hspace{-3mm}
\subfigure[$\beta=0.4,d=8,b=1$]{\label{fig:compobj_c}\includegraphics[width=0.33\textwidth]{figures/compare_obj2/13-beta=0.4,latent_dim=8,layer_width=64,block_diag_size=1,lr=0.001__elbo_approx_compare.png}}\hspace{-3mm}
\subfigure[$\beta=0.6,d=8,b=3$]{\label{fig:compobj_d}\includegraphics[width=0.33\textwidth]{figures/compare_obj2/23-beta=0.6,latent_dim=8,layer_width=64,block_diag_size=3,lr=0.001__elbo_approx_compare.png}}\hspace{-3mm}
\subfigure[$\beta=0.8,d=8,b=3$]{\label{fig:compobj_e}\includegraphics[width=0.33\textwidth]{figures/compare_obj2/29-beta=0.8,latent_dim=8,layer_width=64,block_diag_size=3,lr=0.001__elbo_approx_compare.png}}\hspace{-3mm}
\subfigure[$\beta=1,d=12,b=2$]{\label{fig:compobj_f}\includegraphics[width=0.33\textwidth]{figures/compare_obj2/34-beta=1.0,latent_dim=12,layer_width=64,block_diag_size=2,lr=0.001__elbo_approx_compare.png}}
\caption{MNIST: Comparison of objective values of $\beta$-VAE and GRAE (Eq. \eqref{eq:elbo_gauss_obs}) for different values of $\beta$, latent dimensionality $d$, and block size $b$ for posterior covariance matrix.}
\label{fig:compare_obj_mnist}
\end{figure*}
}
\vspace{-1mm}   
\section{Related Work}
\label{sec:related}
Our work resembles existing work on marginalizing out noise to obtain deterministic regularizers. \citet{maaten2013learning} proposed deterministic regularizers obtained with marginalizing feature noise belonging to exponential family distributions. \citet{chen2012marginalized} marginalize noise in denoising autoencoders to learn robust representations. Marginalization of dropout noise has also been explored in the context of linear models as well as deep neural networks \cite{srivastava2013improving,wang2013fast,srivastava2014dropout, poole2014analyzing}. Recently, \citet{ghosh2019variational} argued for replacing stochasticity in VAEs with deterministic regularizers resulting in deterministic regularized autoencoder objectives. However, the regularizers considered there were motivated from a heuristic perspective of encouraging smoothness in the decoding model. Another recent work \citep{kumar2020learning} uses injective probability flow to derive  autoencoding objectives with Jacobian-based regularizers. Different from these works, our goal in Sec. \ref{sec:detapprox} is to uncover the regularizers implicit in the $\beta$-VAE objective by marginalizing out noise coming from the variational distribution, and solving for its optimal covariance, yielding the regularizers in \eqref{eq:elbo_approx}, \eqref{eq:elbo_approx2},  \eqref{eq:elbo_gauss_optcov} and \eqref{eq:elbo_gauss_obs}.%

Recent work by \citet{rolinek2019variational} considered the case of VAEs with a Gaussian prior, Gaussian variational distribution with diagonal covariance and Gaussian observations, and showed that diagonal posterior covariance encourages the decoder Jacobian to have orthogonal columns. In our work, we explicitly characterize the regularization effect of variational distribution covariance, obtaining insights into the effect of arbitrary sparsity structure of the variational distribution's precision matrix $\Sigma_{z|x}^{-1}$ on the decoder Jacobian (Eq.~\eqref{eq:covjacob}). This leads to regularized autoencoder objectives with specific regularizers (\eg, Eq. \eqref{eq:elbo_gauss_optcov}). Our analysis also generalizes to non-Gaussian observation models.
\citet{park2019variational} address the amortization gap and limited expressivity of diagonal variational posteriors in VAEs 
by using Laplace approximation around the \emph{mode of 
the posterior}. On the other hand, our analysis starts with the Taylor approximation of the conditional $p(x|z)$ around the \emph{mean of the variational posterior} and then further diverges from \cite{park2019variational} towards different goals.
\citet{stuhmer2019independent} recently studied structured non-Gaussian \emph{priors} for unsupervised disentanglement, motivated by rotational non-uniqueness issues associated with isotropic Gaussian prior, however there will still be mixing transforms that leave these priors invariant. 
\citet{burgess2018understanding} discuss how disentangled representations may emerge when using a mean field variational family, but do not present a rigorous argument connected to uniqueness. Recently, \citet{khemakhem2019variational} provided uniqueness results for conditional VAEs with \emph{conditionally} factorized priors.

\section{Experiments}
\begin{table*}[t]
\caption{CelebA samples using the standard normal prior for $\beta$-VAE and GRAE$_\approx$ (Eq.~\eqref{eq:elbo_gauss_obs} with the regularizer approximated by stochastic approximation of the upper bound of Eq.~\eqref{eq:ubreg}) for different values of $\beta$. Samples from models trained using the GRAE$_\approx$ objective have similar smoothness/blurriness as samples from $\beta$-VAE models for a wide range of $\beta$. More samples can be viewed in  Appendix G.} %
\centering
\begin{tabular}{lccccc}
& $\beta=0.02$ & $\beta=0.06$ &  $\beta=0.1$ &  $\beta=0.4$ & $\beta=0.8$ \\
\rotatebox[origin=c]{90}{~~~~~~~~~~~~~~~~~~~~~~~~~~{\bf VAE} Samples}\hspace{-4mm} & \includegraphics[width=0.18\textwidth]{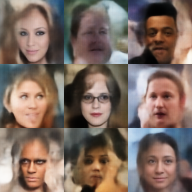}\hspace{-2mm} &
\includegraphics[width=0.18\textwidth]{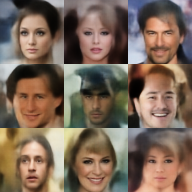}\hspace{-2mm} &
\includegraphics[width=0.18\textwidth]{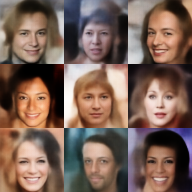}\hspace{-2mm} &
\includegraphics[width=0.18\textwidth]{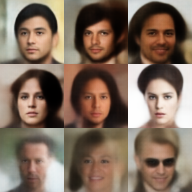}\hspace{-2mm} &
\includegraphics[width=0.18\textwidth]{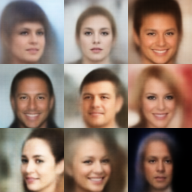} \vspace{-20mm}\\
\rotatebox[origin=c]{90}{~~~~~~~~~~~~~~~~~~~~~~~~{\bf GRAE$_{\bm{\approx}}$ Samples}} \hspace{-4mm} &
\includegraphics[width=0.18\textwidth]{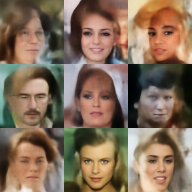}\hspace{-2mm} &
\includegraphics[width=0.18\textwidth]{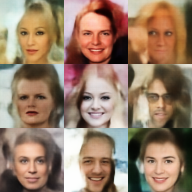}\hspace{-2mm} &
\includegraphics[width=0.18\textwidth]{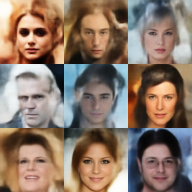}\hspace{-2mm} &
\includegraphics[width=0.18\textwidth]{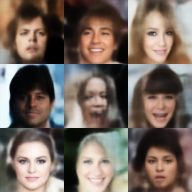}\hspace{-2mm} &
\includegraphics[width=0.18\textwidth]{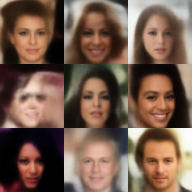} \vspace{-20mm}
\end{tabular}
\label{tab:celebasamples_betasweep}
\end{table*}

\begin{figure}[t]
\centering
\includegraphics[width=0.35\textwidth]{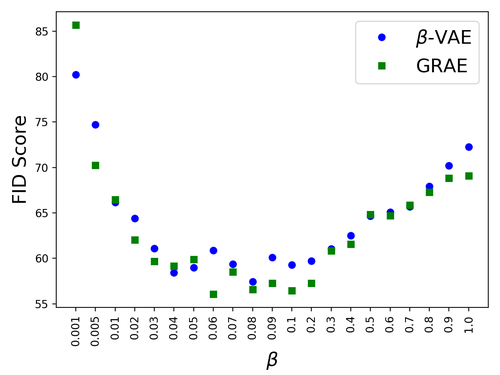}
\caption{FID scores for CelebA generated samples using $\beta$-VAE and GRAE$_\approx$: averaged over 5 runs for each $\beta$. All standard deviations are less than $1$. Samples from the two models stay close in terms of FID scores, further substantiating the validity of our derived deterministic approximation.}
\label{fig:fidcomp}
\end{figure}

We conduct experiments on MNIST \citep{Lecun98gradient-basedlearning} and CelebA  \citep{liu2015deep} to test the closeness of the proposed deterministic regularized objectives to the original $\beta$-VAE objective and the relation between the %
covariance of the variational distribution and the Jacobian of the decoder. Standard train-test splits are used in all experiments for both datasets. We use Gaussian distributions for the prior, variational distribution, and observation model in all our experiments for both $\beta$-VAE and the proposed deterministic objectives (which reduces to the GRAE objective of Eq.~\eqref{eq:elbo_gauss_obs} for the Gaussian case). Note that for the isotropic Gaussian observation model we use, multiplication by $\beta$ for the KL term in $\beta$-VAE is equivalent to assuming an isotropic decoding covariance of $\frac{1}{\beta}I$ in a VAE.
We use $64\times 64$ cropped images for CelebA faces as in several earlier works. 
We use 5 layer CNN architectures for both the encoder and decoder, %
with elu activations \citep{clevert2015fast} in all hidden layers. 
We use elu to test the approximation quality of GRAE objective when Eq.~\eqref{eq:hess_approx} is not exact. For piecewise-linear activations such as relu, Eq.~\eqref{eq:hess_approx} is exact and the approximation is better compared to elu.
The same architecture is used for both $\beta$-VAE and GRAE, except an additional fully-connected (fc) output layer in the encoder for $\beta$-VAE that produces the standard deviations of the amortized mean field variational distributions. 
For non-factorized Gaussian variational distribution, we mainly work with block-diagonal covariance matrices, which restricts the corresponding precision matrices to also be block-diagonal. This is implemented by an additional fc layer in the encoder outputting the entries of the appropriate Cholesky factors $C_{z|x}$ such that $\Sigma_{z|x}=C_{z|x}C_{z|x}^\top$ is block diagonal with desired block sizes. 
All MNIST and CelebA models are trained for 20k and 50k iterations respectively, using the Adam optimizer \citep{kingma2014adam}.  

{\bf Relation between covariance of the variational distribution and Jacobian structure.~}
Based on the relation in Eq. \eqref{eq:covjacob}, we expect to see similar block diagonal structures in $\Sigma^{-1}_{z|x}$ (or $\Sigma_{z|x}$) and $J_g(h(x))^\top J_g(h(x))$. Figure \ref{fig:blksz23_mnist} visualizes  these two matrices for a few test points for different values of $\beta$, latent dimensionality $d$, and block size $b$. In many cases, these two matrices stay close to each other, providing evidence for our theoretical observation.

{\bf Comparing the $\beta$-VAE and GRAE objectives.~} 
We empirically compare the values of the $\beta$-VAE  (minimization) objective \eqref{eq:elbo_nonpara}, its Taylor approximation \eqref{eq:elbo_approx2}, and the GRAE objective \eqref{eq:elbo_gauss_obs}. We train the same architecture as above on the $\beta$-VAE objective while using a single sample approximation for the expectation term $\expect_{q(z|x)} \log p(x|z)$. 
We evaluate all three objectives on a fixed held-out test set of $5000$ examples on checkpoints from the stochastic $\beta$-VAE model.
Fig.~\ref{fig:compare_obj_mnist}
shows a comparison of the three objectives for MNIST. The  objectives remain close over a wide range of $\beta$ values, with the gaps among them decreasing for smaller $\beta$ values. %
The gap between the $\beta$-VAE Taylor approximation and GRAE objectives increases more rapidly with $\beta$ than the gap between $\beta$-VAE and its Taylor approximation. This gap is analytically given by $\frac{\beta}{2}[\![\tr(I+ \frac{1}{\beta} J_g(h(x))^\top  J_g(h(x))\Sigma_{z|x}) - \log |(I+ \frac{1}{\beta} J_g(h(x))^\top  J_g(h(x))\Sigma_{z|x}| - d]\!]$, which vanishes when Eq.~\eqref{eq:covjacob} holds exactly. 
Although these gaps increase with larger $\beta$ where the %
variance of the variational     distribution increases, we observe that the trend of the three objectives is still strikingly similar (ignoring constant offsets). We provide more comparison plots in Appendix F. %

{\bf Training models using the GRAE$_{\bm{\approx}}$ objective.~}
Next, we test how the GRAE objective behaves when used for training. As discussed earlier, we use a stochastic approximation of Eq. \eqref{eq:ubreg} for the decoder regularizer to train the model (referred as GRAE$_\approx$ objective). We work with CelebA $64\times 64$ images and use standard Gaussian prior with latent dimensionality of 128, Gaussian observations, and factored Gaussian variational distribution. %
Table~\ref{tab:celebasamples_betasweep} shows the decoded samples from standard normal prior for both GRAE$_\approx$ and $\beta$-VAE trained with different values of $\beta$. It can be noted that the generated samples follow a similar trend qualitatively for both $\beta$-VAE and GRAE$_\approx$, with large $\beta$ values resulting in more blurry samples. This provides an evidence for validity of GRAE$_\approx$ as a training objective that behaves similar to $\beta$-VAE. More samples for a wider range of $\beta$ values are shown in Appendix G. %
As an attempt to quantify the similarity, we also compare the FID scores of the generated samples for both models as a function of $\beta$ in Fig.~\ref{fig:fidcomp}. The FID scores stay close for both models, particularly for $\beta\geq 0.01$ where sample quality is better (\ie, lower FID scores). This further empirically substantiates the faithfulness of our derived deterministic approximation. 
As future work, it would be interesting to explore other tractable approximations to the GRAE objective and different weights for the encoder regularizer and decoder regularizer (which are both currently fixed to $\beta/2$ in the GRAE objective).

\section{Conclusion}
We studied regularization effects in $\beta$-VAE from two perspectives: (i) analyzing the role that the choice of variational family can play in influencing the uniqueness properties of the solutions, and (ii) characterizing the regularization effect of the variational distribution  %
on the decoding model by integrating out the %
noise from the variational distribution in an approximate objective. The second perspective leads to deterministic regularized autoencoding objectives.  Empirical results confirm that the deterministic objectives are close to the original $\beta$-VAE  in terms of objective value and sample quality, further validating the analysis.  %
Our analysis helps in connecting $\beta$-VAEs and regularized autoencoders in a more principled manner, and we hope that this will motivate novel regularizers for improved sample quality and diversity in autoencoders.

{\bf Acknowledgements.~} We would like to thank Alex Alemi and Andrey Zhmoginov for providing helpful comments on the manuscript. We also thank Matt Hoffman and Kevin Murphy for insightful discussions.

\bibliography{ml}
\bibliographystyle{icml2020}

\clearpage
\appendix
{\large \bf Appendix.~} \\
Here we provide more details, in particular: \\
{\bf Appendix \ref{app:nonunique}}: non-uniqueness results for latent-variable models. \\
{\bf Appendix \ref{app:uniqueness}}: proof of Theorem \ref{thm:main} \\
{\bf Appendix \ref{app:ppca}}: experiments to empirically verify the uniqueness results \\
{\bf Appendix \ref{app:arch}}: architecture details for encoder and decoder\\
{\bf Appendix \ref{app:blockdiag}}: plots visualizing the approximate posterior precision matrix $\Sigma^{-1}_{z|x}$ and $J_g(h(x))^\top J_g(h(x))$ to verify the relation in Eq. \eqref{eq:covjacob}\\
{\bf Appendix \ref{app:compobj}}: plots comparing objective values of $\beta$-VAE and GRAE\\
{\bf Appendix \ref{app:samples}}: comparison of samples from $\beta$-VAE and GRAE$_\approx$

\section{Non-uniqueness of latent-variable models}\label{app:nonunique}
{\bf Non-uniqueness for factorized priors. } For models with continuous factorized priors, $p(z)=\prod_i p_i(z_i)$, \citet{locatello2019challenging} present a construction of transformations $r$ that mix latents but leave the marginal data distribution unchanged. 
We summarize it here for completeness and also extend it to the case of non-factored priors which was conjectured but not covered in \citep{locatello2019challenging}. %
The construction first transforms all marginal distributions $p_i(z_i)$ to uniform distributions by the mapping $u_i = F_{i}(z_i)$ where $F_{i}(\cdot)$ is the c.d.f. corresponding to the density $p_i(\cdot)$. 
This independent uniform random vector $u$ is then transformed to independent standard normal random vector by the mapping $n_i=\psi^{-1}(z_i)$ where $\psi(\cdot)$ is the c.d.f. of the standard normal. As the r.v. $n$ is independent standard normal distributed and thus spherically symmetric, for any orthogonal transformation $U$, the transformed r.v. $z'= r(z) = (F^{-1} \circ \psi \circ U\circ \psi^{-1}\circ F)(z)$ has same density as the prior $p(z)$ while completely mixing the latent variables $z$ (\ie, the transform has Jacobian
$J_r(z)$ with nonzero off-diagonals, with $|\text{det}\, J_r(z)|=1$). The marginal observation density $p(x)$ remains unchanged as well, since $p'(x)=\int p(z)p(x|r(z))dz=\int p(r^{-1}(z'))p(x|z')dz'=
\int p(z')p(x|z')dz'=
p(x)$.

{\bf Non-uniqueness for correlated priors.} Although not discussed in \citet{locatello2019challenging}, it is also possible to generalize this construction to \emph{non-factorized} priors by replacing transformations $u_i=F_{i}(z_i)$ in the first step with $u_i=F_{i|{1\ldots(i-1)}}(z_i, z_1,\ldots,z_{i-1})$ (and its inverse in the last step),  where $F_{i|{1\ldots(i-1)}}$ is the c.d.f. of the conditional distribution of $z_i$ given \emph{earlier} latents (with arbitrary ordering on the latents) \citep{darmois1951analyse,hyvarinen1999nonlinear}.

\section{Uniqueness in $\beta$-VAE}\label{app:uniqueness}
\setcounter{theorem}{0}
We provide the proof of  Theorem~\ref{thm:main} which states that any $\beta$-VAE solution obtained by transforming the generative model of the optimal solution while keeping the prior and the marginal log-likelihood same, will result in decreasing the objective value. 

\begin{theorem}
Let $R$ be the class of mixing transforms that leave the prior invariant, and $p_r$ be the generative model obtained by transforming the latents of the model $p$ by $r\in R$. %
Let $L(q^*, p^*) = \max_{q \in \mcal{Q}, p \in \mcal{P}} L(q, p)$.
Let $\mcal{\widetilde{Q}}$ be the completion of $\mcal{Q}$ by $R$, \ie, $\mcal{\widetilde{Q}}=\{r\#q: q\in\mcal{Q}, r\in R\}\cup \mcal{Q}$.  If $\mcal{Q}$ is s.t. $q \in \mcal{Q} \Rightarrow r\#q \not\in \mcal{Q}$ for all $r\in R$, and $\argmax_{q\in\mcal{\widetilde{Q}}}L(q,p^*)$ is unique, then $\max_{q \in \mcal{Q}} L(q, p_r^*) < L(q^*, p^*)$.
\label{thm:main_app}
\end{theorem}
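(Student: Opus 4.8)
The plan is to drive everything from a single invariance: the objective $L$ is unchanged when we act by a prior-preserving transform $r\in R$ simultaneously on the variational distribution and on the decoder. Concretely, I would first verify that for every $q$, every model $p$, and every $r\in R$,
\begin{equation}
L(r\#q,\, p) = L(q,\, p_r),
\label{eq:inv}
\end{equation}
where $p_r$ has decoder $z\mapsto p(x\mid r(z))$. The reconstruction term $\expect_{q}[\log p(x\mid z)]$ matches on the two sides after the change of variables $w=r(z)$, and the KL term is preserved because $\kl$ is invariant under applying the common bijection $r$ to both arguments together with the hypothesis $r\#p_z=p_z$, so that $\kl(r\#q\Vert p_z)=\kl(r\#q\Vert r\#p_z)=\kl(q\Vert p_z)$. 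Identity \eqref{eq:inv} is the engine of the argument; the remaining steps are bookkeeping that localize the unique maximizer.

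Next I would show that $q^*$ is in fact the (unique) maximizer of $L(\cdot,p^*)$ over the \emph{enlarged} family $\mcal{\widetilde{Q}}$, not merely over $\mcal{Q}$. For $q\in\mcal{Q}$ this is immediate from joint optimality of $(q^*,p^*)$ over $\mcal{Q}\times\mcal{P}$, giving $L(q,p^*)\le L(q^*,p^*)$. For an element $q=r'\#q_0\in\mcal{\widetilde{Q}}\setminus\mcal{Q}$ with $q_0\in\mcal{Q}$ and $r'\in R$, I would apply \eqref{eq:inv} to rewrite $L(r'\#q_0,p^*)=L(q_0,p^*_{r'})$; since $p^*_{r'}\in\mcal{P}$ (transforming latents keeps the prior fixed and only changes the decoder) and $q_0\in\mcal{Q}$, joint optimality again yields $L(q_0,p^*_{r'})\le L(q^*,p^*)$. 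Hence $q^*$ attains $\max_{q\in\mcal{\widetilde{Q}}}L(q,p^*)$, and the uniqueness hypothesis promotes it to the unique maximizer.

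Finally I would establish the strict drop. By \eqref{eq:inv} with $p=p^*$, we have $\max_{q\in\mcal{Q}}L(q,p_r^*)=\max_{q\in\mcal{Q}}L(r\#q,p^*)$, an optimization of $L(\cdot,p^*)$ over the set $\{r\#q:q\in\mcal{Q}\}\subseteq\mcal{\widetilde{Q}}$. The crucial observation is that this set excludes $q^*$: if $q^*=r\#q$ for some $q\in\mcal{Q}$, then $q=r^{-1}\#q^*$ with $r^{-1}\in R$, contradicting the hypothesis that $q^*\in\mcal{Q}$ implies $r^{-1}\#q^*\notin\mcal{Q}$. Letting $\hat q\in\mcal{Q}$ attain the maximum, $r\#\hat q\in\mcal{\widetilde{Q}}\setminus\{q^*\}$, so uniqueness forces $L(r\#\hat q,p^*)<L(q^*,p^*)$, which is exactly the claim $\max_{q\in\mcal{Q}}L(q,p_r^*)<L(q^*,p^*)$.

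I expect the main obstacle to be the second step rather than the invariance computation: the delicate point is that joint optimality over the decoder family $\mcal{P}$ (not just over $\mcal{Q}$) is precisely what upgrades $q^*$ from a maximizer over $\mcal{Q}$ to a maximizer over the completion $\mcal{\widetilde{Q}}$, which is what makes the uniqueness hypothesis applicable. One should also take care that the final strictness genuinely uses attainment of the outer $\max$ (so that the excluded point $q^*$ cannot be approached), which is implicit in writing $\max$ rather than $\sup$.
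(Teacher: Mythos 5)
Your proposal is correct and follows essentially the same route as the paper's proof: both rest on the invariance $L(r\#q,p)=L(q,p_r)$ (which the paper derives by an explicit change of variables in the integral), both show that $q^*$ remains optimal over the completion $\mcal{\widetilde{Q}}$, and both conclude the strict inequality from the uniqueness hypothesis together with the fact that $r\#\mcal{Q}$ excludes $q^*$. The only cosmetic difference is that the paper phrases the last step as a proof by contradiction while you argue it directly.
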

\begin{proof}
The objective after transforming a model $p$ with $r\in R$ yields:
\begin{align}
\begin{split}
& L(q, p_r) := \expect_{q(z|x)} \log p_{x|z}(x|r(z)) - \beta\, \kl(q(z|x)\Vert p_z(z)) \\
&=\int \log\left( p_{x|z}(x|r(z)) \left(\frac{p_z(z)}{q(z|x)}\right)^\beta\right) q(z|x) dz \\
&=\int \log\left( p_{x|z}(x|z') \left(\frac{p_z(r^{-1}(z'))}{q(r^{-1}(z')|x)}\right)^\beta\right) q(r^{-1}(z')|x) dz' \\
&=\int \log\left( p_{x|z}(x|z) \left(\frac{p_z(z)}{q(r^{-1}(z)|x)}\right)^\beta\right) q(r^{-1}(z)|x) dz \\
&=\expect_{q_{r}} \log p_{x|z}(x|z) - \beta\, \kl(q_{r}\Vert p_z) = L(q_r, p).
\end{split}
\label{eq:elbo_r_app}
\end{align}
where we used the shorthand $q_{r}$ to denote the transformed variational density $r\# q_{z|x}$.  %
This also implies that $L(q,p)=L(q_{r^{-1}}, p_r)$. 
Hence, the optimum value of the $\beta$-VAE objective $L(q^*, p^*) = L(q_{r^{-1}}^*, p_r^*)$ $\forall r\in R$. 
Choosing $\mcal{Q}$ such that $q \in \mcal{Q} \Rightarrow r\#q \not\in \mcal{Q}$ $\forall r\in R$ will ensure that $q_{r^{-1}}^*\notin \mcal{Q}$. 
Note that optimizing the $\beta$-VAE objective over $\mcal{\widetilde{Q}}=\{r\# q: q\in\mcal{Q}, r\in R\}$ does not change the optimal value, \ie, $L(q^*,p^*)= \max_{q\in\mcal{Q},p\in\mcal{P}} L(q,p) = \max_{q\in\mcal{\widetilde{Q}},p\in\mcal{P}} L(q,p)$. 
The statement of the theorem can now be proven by contradiction. Suppose $\exists q'\in \mcal{Q}$
s.t. $L(q', p_r^*)=L(q'_r, p^*) \geq L(q^*, p^*)$. 
However, this will contradict the assumption that $q^* =\argmax_{q\in\mcal{\widetilde{Q}}} L(q,p^*)$ and that it is unique. 
Hence $\forall q'\in\mcal{Q}$, $L(q',p_r^*)< L(q^*,p^*)$.
\end{proof}

\begin{corollary}
If the set $\mcal{\widetilde{Q}}$ defined in Theorem~\ref{thm:main} is convex, then $\argmax_{q\in\mcal{\widetilde{Q}}}L(q,p^*)$ is unique. When all other assumptions of Theorem~\ref{thm:main} are met, we have $\max_{q \in \mcal{Q}} L(q, p_r^*) < L(q^*, p^*)$.
\end{corollary}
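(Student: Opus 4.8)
The plan is to establish that the map $q \mapsto L(q, p^*)$ is strictly concave on the space of probability densities, so that on the convex set $\mcal{\widetilde{Q}}$ it admits at most one maximizer; the second claim then follows immediately by invoking Theorem~\ref{thm:main}.

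First I would decompose the objective at the fixed decoder $p^*$ as
\begin{align}
L(q, p^*) = \expect_{q(z|x)} \log p^*_{x|z}(x|z) - \beta\, \kl(q(z|x)\Vert p_z(z)),
\end{align}
and observe that the reconstruction term is \emph{linear} in $q$, being the integral $\int q(z|x)\log p^*_{x|z}(x|z)\,dz$ of a fixed function against $q$. All of the curvature is carried by the regularizer $-\beta\,\kl(q\Vert p_z)$: writing $\kl(q\Vert p_z) = \int q\log q\,dz - \int q\log p_z\,dz$, the second piece is again linear in $q$, while the negative-entropy functional $q \mapsto \int q\log q\,dz$ is strictly convex because $t\mapsto t\log t$ is strictly convex on $(0,\infty)$. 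Hence, for $\beta>0$, the map $q\mapsto L(q,p^*)$ is strictly concave.

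Next I would argue uniqueness directly from strict concavity on the convex domain. If $q_1, q_2 \in \mcal{\widetilde{Q}}$ were two distinct maximizers sharing the optimal value $L^*$, then, $\mcal{\widetilde{Q}}$ being convex, the midpoint $\tfrac{1}{2}(q_1+q_2)$ also lies in $\mcal{\widetilde{Q}}$, and strict concavity gives $L(\tfrac{1}{2}(q_1+q_2), p^*) > \tfrac{1}{2}L(q_1,p^*) + \tfrac{1}{2}L(q_2,p^*) = L^*$, contradicting maximality. Thus $\argmax_{q\in\mcal{\widetilde{Q}}}L(q,p^*)$ has at most one element; since the proof of Theorem~\ref{thm:main} shows the optimal value over $\mcal{\widetilde{Q}}$ equals $L(q^*,p^*)$ and is attained at $q^*\in\mcal{Q}\subseteq\mcal{\widetilde{Q}}$, the argmax is the singleton $\{q^*\}$, which establishes uniqueness. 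With this secured, every hypothesis of Theorem~\ref{thm:main} is in force, so its conclusion $\max_{q \in \mcal{Q}} L(q, p_r^*) < L(q^*, p^*)$ follows verbatim.

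The one point requiring care, and the main (if modest) obstacle, is the \emph{strictness} of the concavity: I would need to confirm that distinct densities $q_1\neq q_2$ in $\mcal{\widetilde{Q}}$ differ on a set of positive measure, so that the pointwise strict inequality coming from $t\mapsto t\log t$ integrates to a strict inequality rather than collapsing to an equality. This is precisely where the argument uses strict — not merely weak — convexity of the negative entropy, and it is the step I would state most carefully.
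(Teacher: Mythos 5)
Your proposal is correct and follows essentially the same route as the paper: both decompose $L(q,p^*)$ into a term linear in $q$ plus the $-\beta\,\kl(q\Vert p_z)$ term whose convexity in $q$ forces a unique maximizer over the convex set $\mcal{\widetilde{Q}}$ (the paper invokes strong convexity of the KL divergence by citation, whereas you derive strict concavity more explicitly from the strict convexity of $t\mapsto t\log t$, which suffices equally well). Your closing remark about verifying that distinct densities differ on a set of positive measure is exactly the right point to be careful about, and it is handled implicitly by the cited strong-convexity results in the paper's version.
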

\begin{proof}
As the $\beta$-VAE objective is strongly concave in $q$ (first term is linear in $q$ and the KL-divergence term is strongly convex in $q$ \citep{adamvcik2014information,sason2016f}), optimizing it over a convex set $\mcal{\widetilde{Q}}$ for a fixed $p$ will have a unique maximizer. Hence, the result of  Theorem~\ref{thm:main} holds when all other assumptions are met.
\end{proof}

\section{Uniqueness in $\beta$-VAE: empirical results}\label{app:ppca}
\begin{figure*}[t]
\centering
\includegraphics[width=0.35\textwidth]{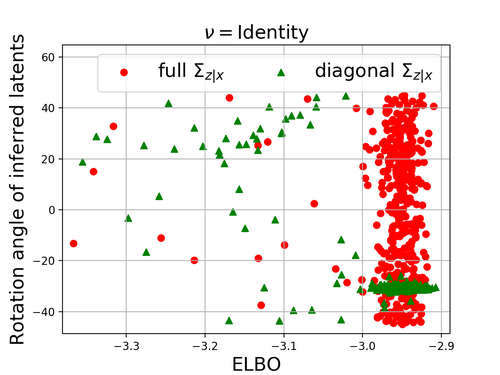}
\includegraphics[width=0.35\textwidth]{figures/ppca/10507739_amortized_Arandn_Crotinit_tanh_calcsimple_sigmamfmatrixdiag.png}
\includegraphics[width=0.35\textwidth]{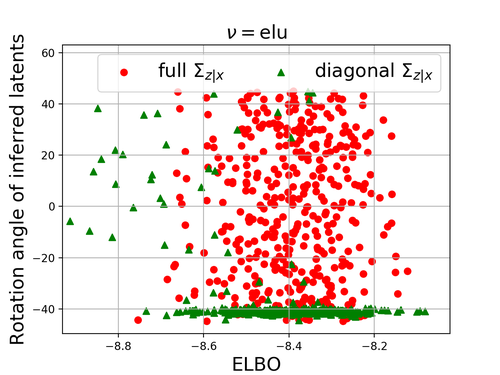}
\includegraphics[width=0.35\textwidth]{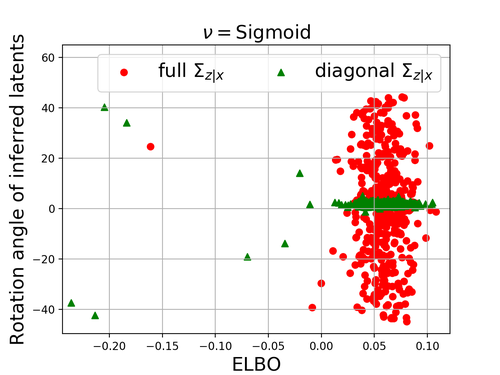}
\caption{Rotation angle between true latents and inferred latents with VAEs (multiple runs): The true generative model for the data is $x = \nu(Wz) + \epsilon$, with $z\sim\mcal{N}(0,I_{2\times 2}), \epsilon\sim\mcal{N}(0,.05^2 I_{2\times 2})$. Diagonal $\Sigma$ encourages unique solution in terms of inferred latents as the ELBO increases. Please refer to the text for more details.}
\label{fig:ppca_am}
\end{figure*}
\begin{figure*}[h]
\centering
\includegraphics[width=0.35\textwidth]{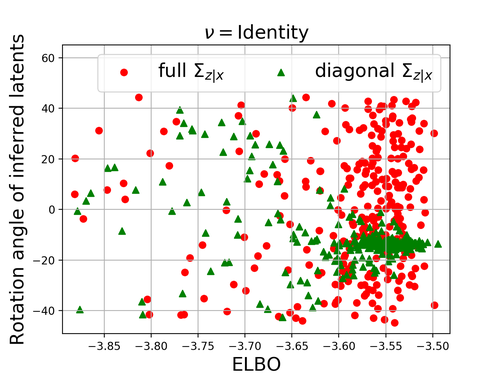}
\includegraphics[width=0.35\textwidth]{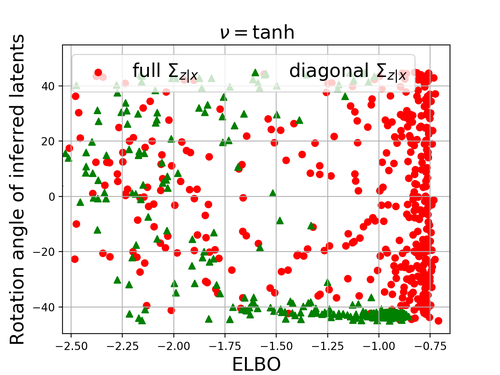}
\includegraphics[width=0.35\textwidth]{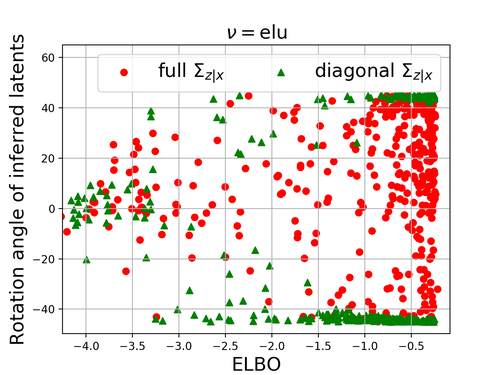}
\includegraphics[width=0.35\textwidth]{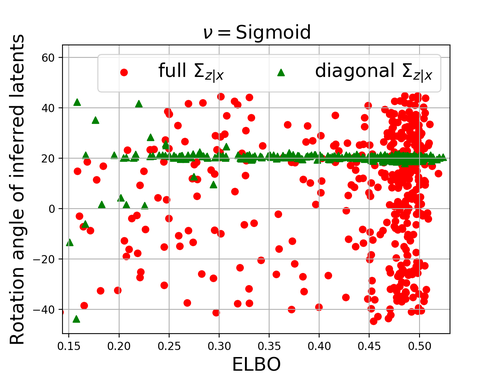}
\caption{Rotation angle between true latents and inferred latents with unamortized variational inference (multiple runs): 
The true generative model for the data is $x = \nu(Wz) + \epsilon$, with $z\sim\mcal{N}(0,I_{2\times 2}), \epsilon\sim\mcal{N}(0,.05^2 I_{2\times 2})$. Diagonal $\Sigma$ encourages unique solution in terms of inferred latents as the ELBO increases. Please refer to the text for more details.}
\label{fig:ppca_unam}
\end{figure*}

Here we provide more details on simulations we perform to verify that appropriately choosing the variational family $\mcal{Q}$ imparts uniqueness to the $\beta$-VAE (or plain VI) solution with respect to the corresponding class of transformations $R$. We consider the case of $\mcal{Q}$ being mean-field distributions and $R$ being rotations. We work with two dimensional data so that rotations can be represented by a scalar (angle of planar rotation) which can be plotted conveniently. 

We generate $4$k samples in two dimensions by the generative model $x = \nu(Wz) + \epsilon$, with $z\sim\mcal{N}(0,I_{2\times 2}), \epsilon\sim\mcal{N}(0,.05^2 I_{2\times 2})$. Entries of $W\in\reals^{2\times 2}$ are sampled from $\mcal{N}(0,I_{2\times 2})$ and normalized such that columns have different $\ell_2$ norms (2 and 1, respectively). The function $\nu$ acts elementwise on the vector and is taken to be either Identity (reducing it to Probabilistic PCA) or a nonlinearity (tanh, sigmoid, elu). A VAE is learned on this data, with the decoder of the same form as the generative model, \ie, $x = \nu(Az)+b$, and variational posterior of the form $q(z|x)=\mcal{N}(Cx+d,\Sigma)$ with $A,b,C,d$ and $\Sigma$ learned.

We also do unamortized variational inference with the variational distribution $q_i(z)=\mcal{N}(\mu_i, \Sigma_i)$ for the $i$'th data point ($\mu_i\in\reals^2, \Sigma_i\in S^2_+$). For the case of mean-field variational inference, we learn only diagonal elements of $\Sigma$ or $\Sigma_i$ that appear in the objective (ELBO), leaving the off-diagonal elements free. 

We do multiple runs %
by randomizing over $W, \epsilon, z$ for the true generative model of data and randomizing over the initializations of the VAE parameters. For each run, we measure the rotation angle between the true latents and the inferred posterior means. Rotation angle is computed after ignoring permutation and reflection of axes so the angle ranges in $(-45,45]$  degrees with $-45$ degree rotation being same as $45$ degree rotation. 
The rotation angle is estimated by solving the problem $\min_O \lVert Z - OM\rVert_F^2$ where $O$ is an orthogonal matrix, $Z\in\reals^{2\times 4000}$ is the matrix of original sampled latents used for generating the data, $M\in\reals^{2\times 4000}$ is the matrix of inferred latents (the means of $q(z|x_i)$). This is known as Orthogonal Procrustes Problem whose solution is given by $O=UV^\top$ where $U$ and $V$ are left and right singular vectors of $ZM^\top$. We then extract axes permutation, axes reflection and the rotation angle from the matrix $O$.

We train the model parameters by gradient descent (use all data samples for a gradient update). The learning rate is set to $0.1$ in the beginning and is reduced in a stepwise manner after every $n$ iterations by a factor of $10$, till the learning rate reaches $10^{-5}$ (\ie, each model is trained for $5n$ iterations). To train models up to varying degree of optimality, we take $n$ to be $\{100,500,1\text{k},2\text{k},3\text{k},4\text{k},5\text{k},6\text{k},7\text{k},8\text{k},9\text{k},10\text{k},20\text{k}\}$. We do $20$ random trials for each value of $n$ from $100$ to $9$k, and $100$ random trials for $n=10$k and $n=20$k (total $420$ random trials per experiment).

We also take special care in how we parameterize the models so that the implicit bias of gradient descent does not affect the solution in any particular way. For amortized inference with VAE, we parameterize encoder parameters as $C=O_1C'$ and $\Sigma=O_2D'D'^\top O_2^\top$, where $O_1, O_2$ are random $2\times 2$ rotation matrices that are fixed at the beginning of training (different for each random trial), and $C'$ and $D'$ are the parameters learned by gradient descent. 
For unamortized variational inference, we again parameterize $\Sigma_i=O D_i'D_i'^\top O^\top$ with $O$ being a random $2\times 2$ rotation matrix and $D_i'$ being the trainable parameters. We notice that without these reparameterizations, implicit bias of gradient descent can impart some degree of uniqueness to the model, which we particularly noticed in the case of unamortized inference. 

Fig.~\ref{fig:ppca_am} shows the scatter plots of rotation angle vs ELBO for the amortized case (VAE) with $\nu=$identity,tanh,elu, sigmoid. Having diagonal $\Sigma$ (mean-field $q(z|x)$) encourages unique solution in terms of inferred latents as the ELBO increases (ignoring the permutations and reflections of latent axes). Fig.~\ref{fig:ppca_unam} shows the scatter plots of rotation angle vs ELBO for the unamortized case, with similar observations regarding uniqueness.

\clearpage
\twocolumn
\section{Architectures}\label{app:arch}
Here we describe the architecture for our MNIST and CelebA models.
We list Conv (convoutional) and ConvT (transposed convolution) layers with their number of filters, kernel size, and stride.
Latent dimension for CelebA is fixed to be 128 in all experiments, and the dimension for MNIST experiments is varied in the set $\{8,12,16,20\}$. 
The architecture is shown for the GRAE model. For $\beta$-VAE, the encoder has an extra output through an fully-connected layer for standard deviations of the posterior factors in the case of mean-field posterior, or for the elements of the Cholesky factor in the case of block-diagonal posterior covariance matrices. 

\begin{table}[h]
\centering
\begin{small}
    \begin{tabular}{p{0.45\linewidth}  p{0.45\linewidth}}
    	\toprule
        MNIST &  CelebA \\
		\midrule
        
        \multicolumn{2}{c}{Encoder} \\ \midrule

        $x \in \mathcal{R}^{28{\times}28}$ \newline
        $\rightarrow \text{Conv}_{64,4,1} \rightarrow \text{BN} \rightarrow \text{ELU}$\newline
        $\quad \rightarrow \text{Conv}_{128,4,2}\rightarrow \text{BN} \rightarrow \text{ELU}$\newline
        $\quad \rightarrow \text{Conv}_{256,4,2}\rightarrow \text{BN} \rightarrow \text{ELU}$\newline
        $\quad \rightarrow \text{Conv}_{512,4,2}\rightarrow \text{BN} \rightarrow \text{ELU}$\newline
        $\quad \rightarrow \text{Conv}_{512,4,1}\rightarrow \text{BN} \rightarrow \text{ELU}$\newline
        $\quad \rightarrow \text{Flatten} \rightarrow \text{FC}_{d}$
      & $x \in \mathcal{R}^{64{\times}64\times 3}$ \newline
        $\rightarrow \text{Conv}_{128,5,1} \rightarrow \text{BN} \rightarrow \text{ELU}$\newline
        $\quad \rightarrow \text{Conv}_{256,5,2}\rightarrow \text{BN} \rightarrow \text{ELU}$\newline
        $\quad \rightarrow \text{Conv}_{512,5,2}\rightarrow \text{BN} \rightarrow \text{ELU}$\newline
        $\quad \rightarrow \text{Conv}_{1024,5,2}\rightarrow \text{BN} \rightarrow \text{ELU}$\newline
        $\quad \rightarrow \text{Conv}_{1024,5,2}\rightarrow \text{BN} \rightarrow \text{ELU}$\newline
        $\quad \rightarrow \text{Flatten} \rightarrow \text{FC}_{128}$\\

\midrule
\multicolumn{2}{c}{Decoder} \\ 
\midrule
        $z \in \mathcal{R}^{d} \rightarrow \text{FC}_{7{\times}7{\times}256}$\newline
        $\rightarrow \text{BN} \rightarrow \text{ELU}$\newline
        $\rightarrow \text{ConvT}_{512,4,1}\rightarrow \text{BN} \rightarrow \text{ELU}$\newline
        $\rightarrow \text{ConvT}_{256,4,1}\rightarrow \text{BN} \rightarrow \text{ELU}$\newline
        $\rightarrow \text{ConvT}_{128,4,2}\rightarrow \text{BN} \rightarrow \text{ELU}$\newline
        $\rightarrow \text{ConvT}_{64,4,2}\rightarrow \text{ELU}$\newline
        $\rightarrow \text{Conv}_{1,4,1}\rightarrow \text{Sigmoid}$
      & $z \in \mathcal{R}^{128} \rightarrow \text{FC}_{16{\times}16{\times}512}$\newline
        $\rightarrow \text{BN} \rightarrow \text{ELU}$\newline
        $\rightarrow \text{ConvT}_{1024,5,1} {\tiny \rightarrow \text{BN} \rightarrow \text{ELU}}$\newline
        $\rightarrow \text{ConvT}_{512,5,2}\rightarrow \text{BN} \rightarrow \text{ELU}$\newline
        $\rightarrow \text{ConvT}_{256,5,2}\rightarrow \text{BN} \rightarrow \text{ELU}$\newline
        $\rightarrow \text{ConvT}_{128,5,2}\rightarrow \text{ELU}$\newline
        $\rightarrow \text{Conv}_{3,5,1}\rightarrow \text{Sigmoid}$\\
  \bottomrule
    \end{tabular}
    \end{small}
\end{table}

\section{Relation between %
covariance of the variational distribution and Jacobian structure}\label{app:blockdiag}
We provide more plots visualizing the %
variational distribution precision matrix $\Sigma^{-1}_{z|x}$ and $J_g(h(x))^\top J_g(h(x))$ in Figure \ref{fig:blksz23_mnist_app}. Based on the relation in Eq. \eqref{eq:covjacob}, we expect to see similar block diagonal structures in $\Sigma^{-1}_{z|x}$ (or $\Sigma_{z|x}$) and $J_g(h(x))^\top J_g(h(x))$. 

\begin{figure}[t]
\centering
\subfigure[$\beta=0.2,d=8,b=2$]{\label{fig:blksz2_a}\includegraphics[width=0.13\textwidth]{figures/jjt_plots_blk2/enccov_inv-it-9800_10-lr=0.001,block_diag_size=2,layer_width=64,latent_dim=8,beta=0.2__1.png}~~ 
\includegraphics[width=0.13\textwidth]{figures/jjt_plots_blk2/JJT-it-9800_10-lr=0.001,block_diag_size=2,layer_width=64,latent_dim=8,beta=0.2.png}}\\
\subfigure[$\beta=0.4,d=8,b=2$]{\label{fig:blksz2_b}\includegraphics[width=0.13\textwidth]{figures/jjt_plots_blk2/enccov_inv-it-9800_18-lr=0.001,block_diag_size=2,layer_width=64,latent_dim=8,beta=0.4.png}~~ 
\includegraphics[width=0.13\textwidth]{figures/jjt_plots_blk2/JJT-it-9800_18-lr=0.001,block_diag_size=2,layer_width=64,latent_dim=8,beta=0.4.png}} \\
\subfigure[$\beta=0.6,d=12,b=2$]{\label{fig:blksz2_c}\includegraphics[width=0.13\textwidth]{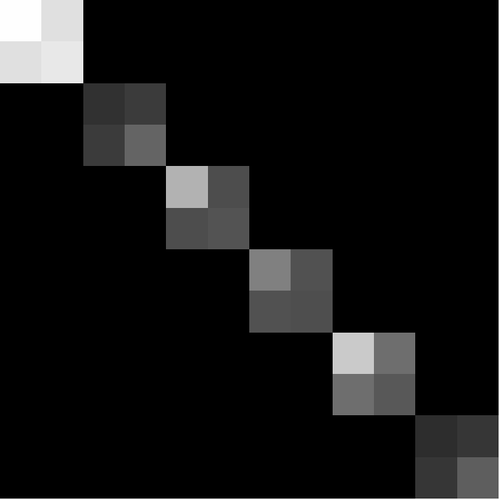}~~ 
\includegraphics[width=0.13\textwidth]{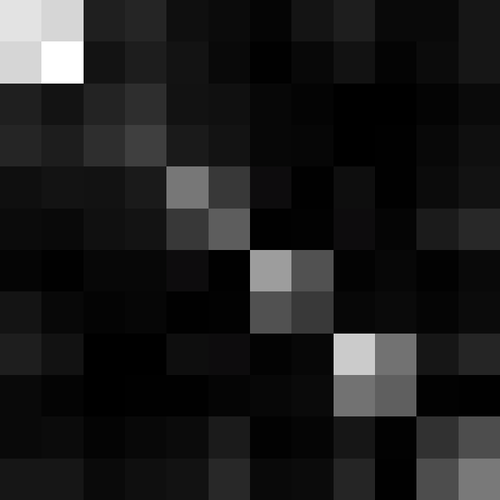}}\\
\subfigure[$\beta=1,d=8,b=2$]{\label{fig:blksz2_d}\includegraphics[width=0.13\textwidth]{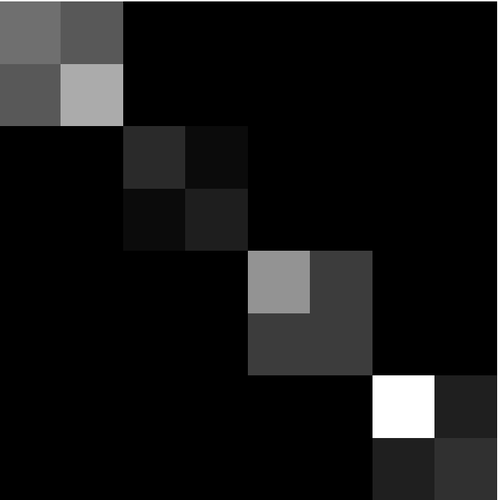}~~ 
\includegraphics[width=0.13\textwidth]{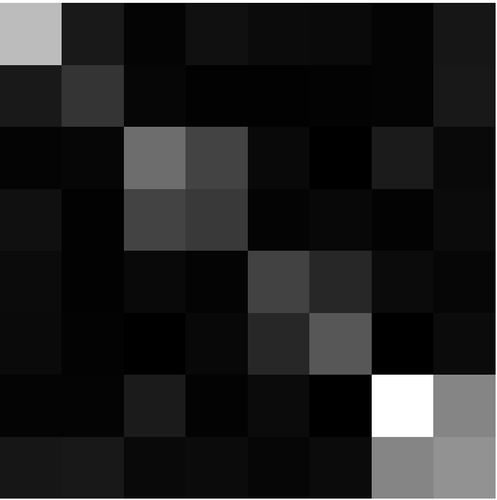}}\\
\subfigure[$\beta=0.4,d=12,b=3$]{\label{fig:blksz3_a}\includegraphics[width=0.13\textwidth]{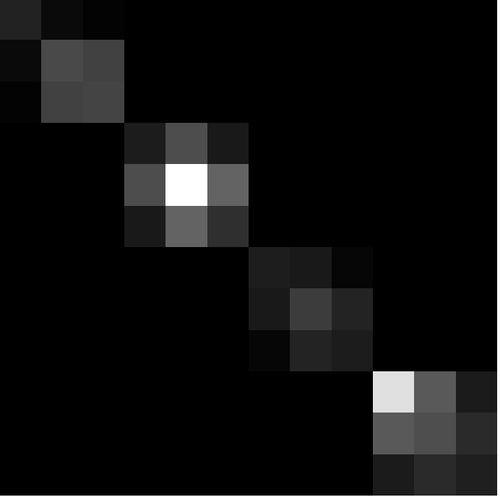}~~ 
\includegraphics[width=0.13\textwidth]{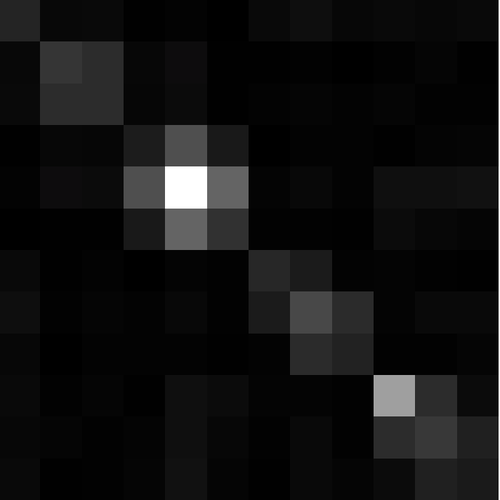}}\\
\subfigure[$\beta=0.6,d=8,b=3$]{\label{fig:blksz3_b}\includegraphics[width=0.13\textwidth]{figures/jjt_plots_blk3/enccov_inv_34-beta=0.6,lr=0.001,block_diag_size=3,latent_dim=8,layer_width=64.png}~~
\includegraphics[width=0.13\textwidth]{figures/jjt_plots_blk3/JJT_34-beta=0.6,lr=0.001,block_diag_size=3,latent_dim=8,layer_width=64.png}}\\
\subfigure[$\beta=0.8,d=20,b=3$]{\label{fig:blksz3_c}\includegraphics[width=0.13\textwidth]{figures/jjt_plots_blk3/enccov_inv_48-beta=0.8,lr=0.001,block_diag_size=3,latent_dim=20,layer_width=64.png}~~
\includegraphics[width=0.13\textwidth]{figures/jjt_plots_blk3/JJT_48-beta=0.8,lr=0.001,block_diag_size=3,latent_dim=20,layer_width=64.png}}

\caption{MNIST: Precision matrices $\Sigma_{z|x}^{-1}$ of variational distributions (left) and the corresponding $J_g(h(x))^\top J_g(h(x))$ (right), for different values of $\beta$ and latent dimensionality $d$. Block size $b$ for the posterior covariance was taken to be 2 (top row) or 3 (bottom row).}
\label{fig:blksz23_mnist_app}
\end{figure}

\clearpage
\onecolumn
\section{Comparison of objective values}\label{app:compobj}
We show more plots comparing objective values of $\beta$-VAE, its Taylor series approximation, and the GRAE objective for various values of $\beta$ on MNIST. The latent dimension $d$ is fixed to 12, and the block-size $b$ (for block-diagonal posterior covariance) is varied over $\{1,2,3\}$. The model is trained using the $\beta$-VAE objective and is evaluated on all objectives using a held-out test set of 5000 examples. As $\beta$ increases, the gap between the $\beta$-VAE objective and its Taylor approximation increases. The gap between the Taylor approximation and the GRAE objective also increases with $\beta$. This gap is analytically given by $\frac{\beta}{2}[\tr(I+ \frac{1}{\beta} J_g(h(x))^\top  J_g(h(x))\Sigma_{z|x}) - \log |(I+ \frac{1}{\beta} J_g(h(x))^\top  J_g(h(x))\Sigma_{z|x}| - d]$. %

\begin{figure*}[h]
\includegraphics[width=0.35\textwidth]{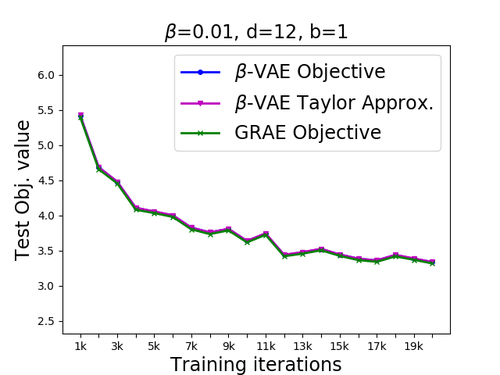}\hspace{-5mm}
\includegraphics[width=0.35\textwidth]{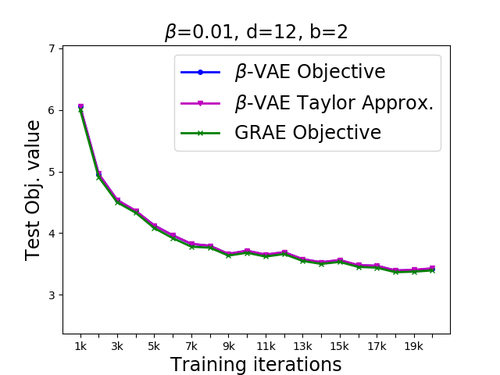}\hspace{-5mm}
\includegraphics[width=0.35\textwidth]{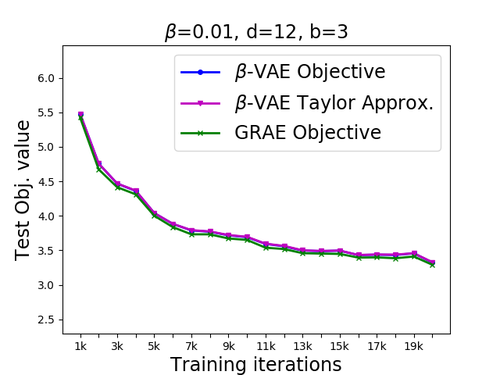}
\includegraphics[width=0.35\textwidth]{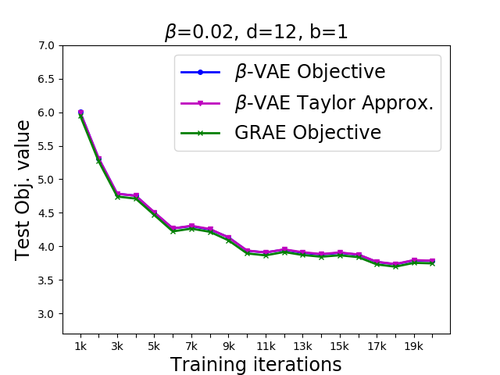}\hspace{-5mm}
\includegraphics[width=0.35\textwidth]{supplementary/compare_obj/43-lr=0.0001,block_diag_size=2,layer_width_enc=64,latent_dim=12,beta=0.02__elbo_approx_compare.npz.png}\hspace{-5mm}
\includegraphics[width=0.35\textwidth]{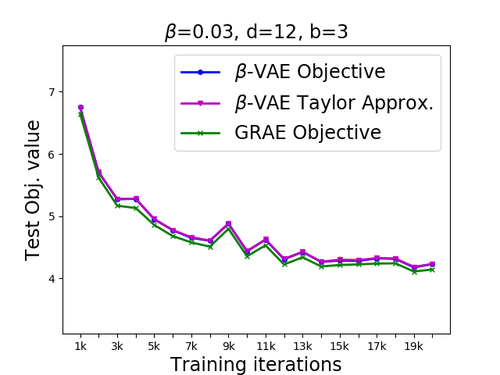}
\includegraphics[width=0.35\textwidth]{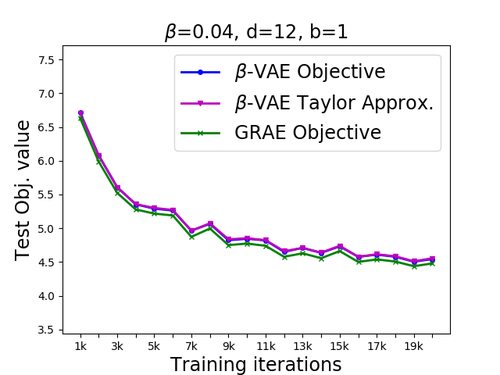}\hspace{-5mm}
\includegraphics[width=0.35\textwidth]{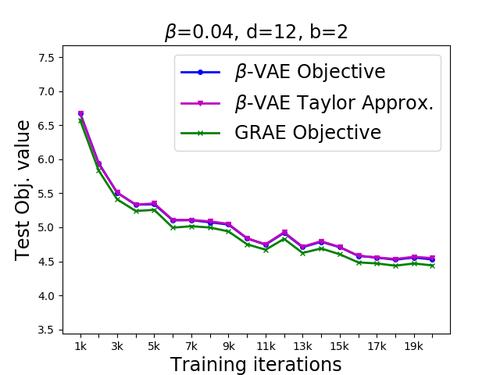}\hspace{-5mm}
\includegraphics[width=0.35\textwidth]{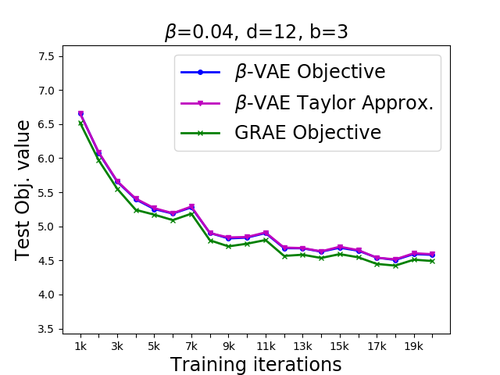}

\end{figure*}

\begin{figure*}
\includegraphics[width=0.35\textwidth]{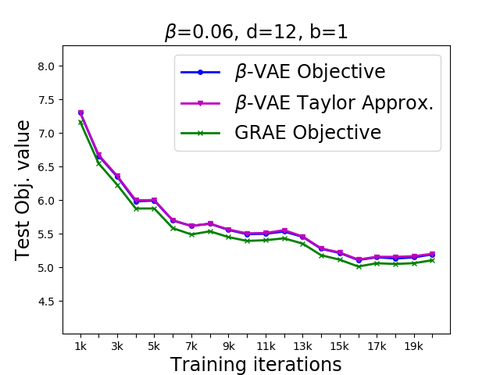}\hspace{-5mm}
\includegraphics[width=0.35\textwidth]{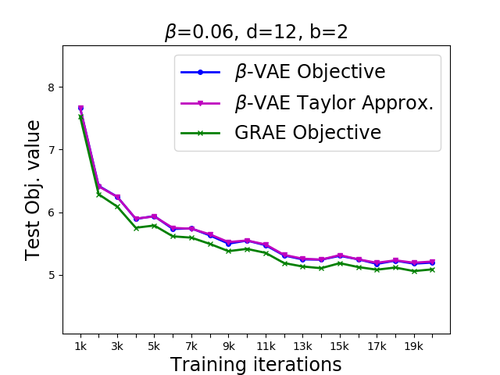}\hspace{-5mm}
\includegraphics[width=0.35\textwidth]{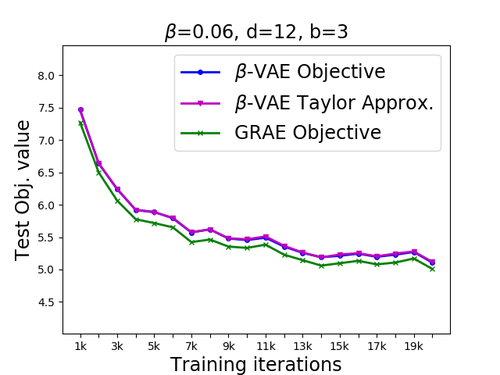}

\includegraphics[width=0.35\textwidth]{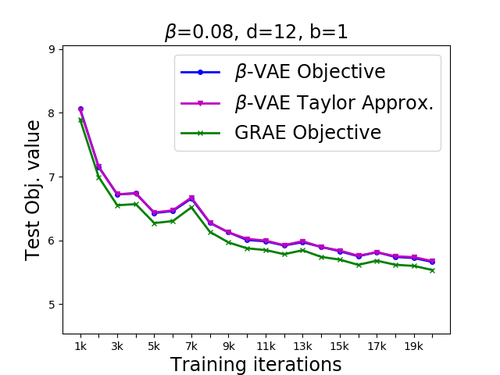}\hspace{-5mm}
\includegraphics[width=0.35\textwidth]{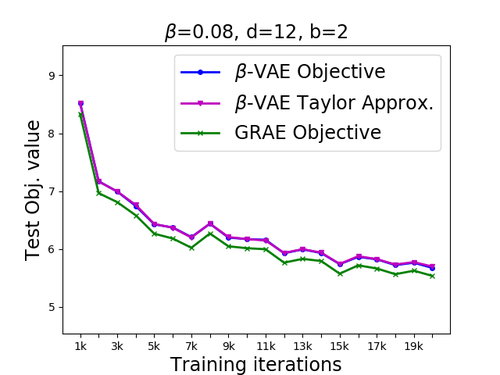}\hspace{-5mm}
\includegraphics[width=0.35\textwidth]{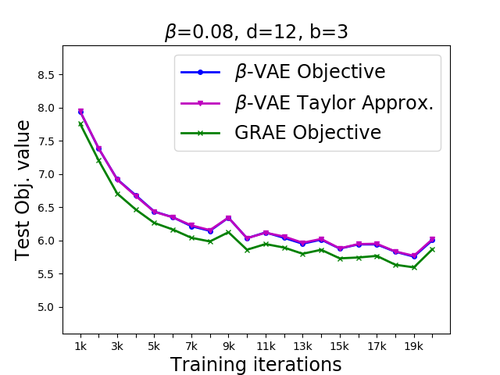}
\includegraphics[width=0.35\textwidth]{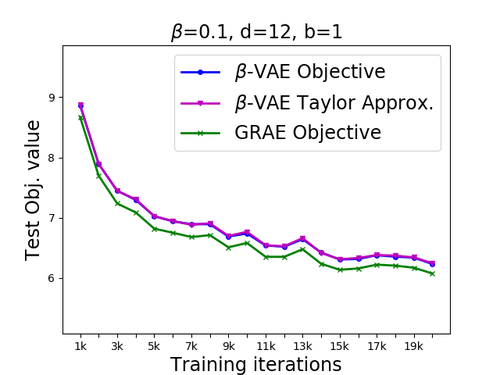}\hspace{-5mm}
\includegraphics[width=0.35\textwidth]{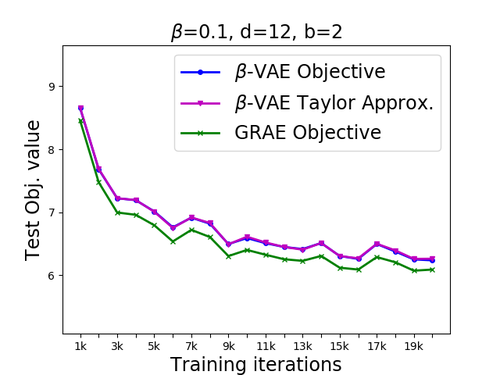}\hspace{-5mm}
\includegraphics[width=0.35\textwidth]{supplementary/compare_obj/142-lr=0.0001,block_diag_size=3,layer_width_enc=64,latent_dim=12,beta=0.1__elbo_approx_compare.npz.png}
\includegraphics[width=0.35\textwidth]{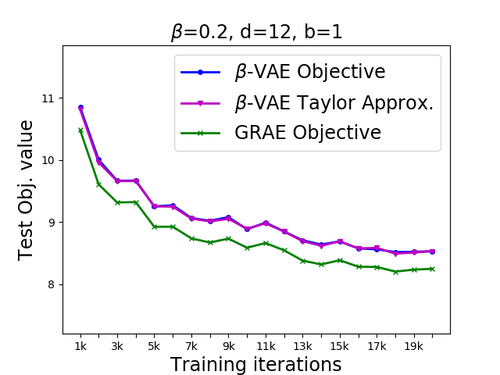}\hspace{-5mm}
\includegraphics[width=0.35\textwidth]{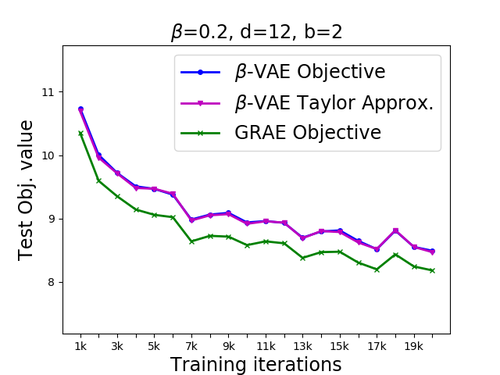}\hspace{-5mm}
\includegraphics[width=0.35\textwidth]{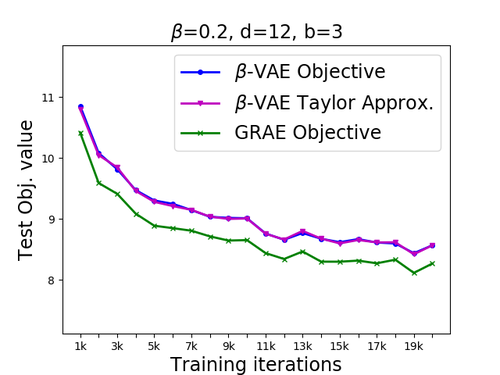}
\end{figure*}

\begin{figure*}[t]
\includegraphics[width=0.35\textwidth]{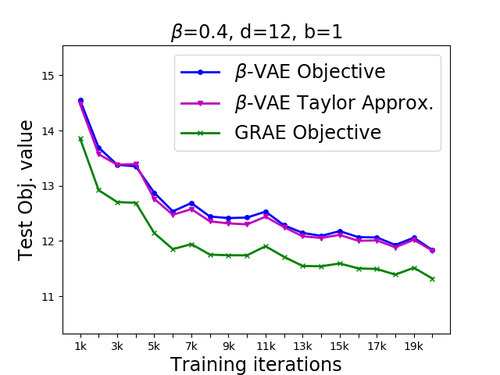}\hspace{-5mm}
\includegraphics[width=0.35\textwidth]{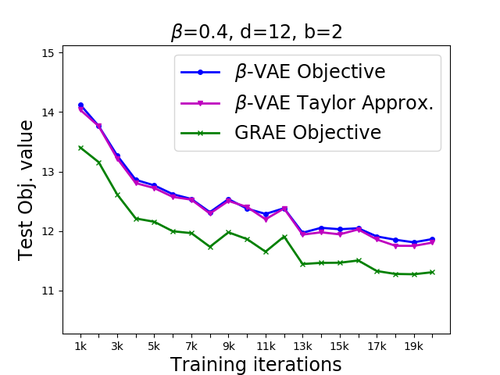}\hspace{-5mm}
\includegraphics[width=0.35\textwidth]{supplementary/compare_obj/178-lr=0.0001,block_diag_size=3,layer_width_enc=64,latent_dim=12,beta=0.4__elbo_approx_compare.npz.png}

\includegraphics[width=0.35\textwidth]{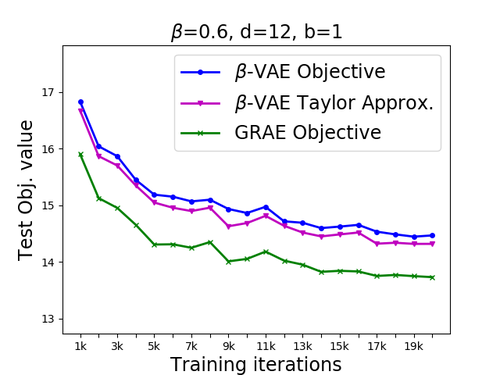}\hspace{-5mm}
\includegraphics[width=0.35\textwidth]{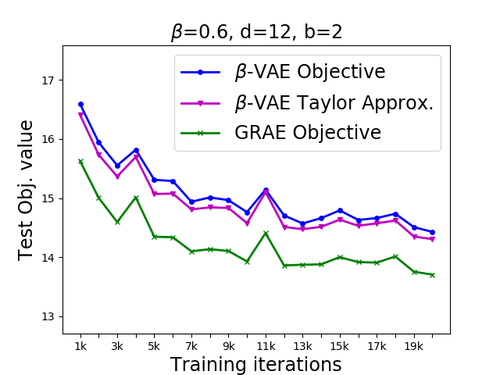}\hspace{-5mm}
\includegraphics[width=0.35\textwidth]{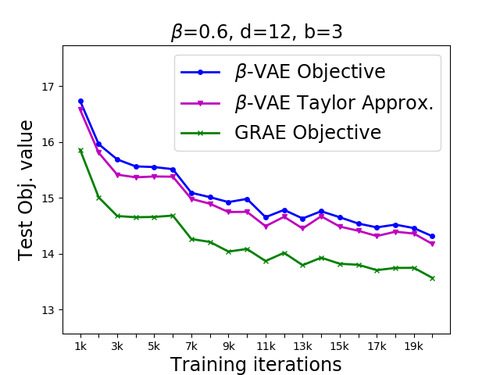}
\includegraphics[width=0.35\textwidth]{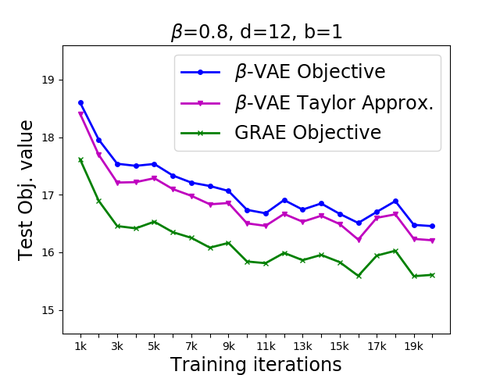}\hspace{-5mm}
\includegraphics[width=0.35\textwidth]{supplementary/compare_obj/223-lr=0.0001,block_diag_size=2,layer_width_enc=64,latent_dim=12,beta=0.8__elbo_approx_compare.npz.png}\hspace{-5mm}
\includegraphics[width=0.35\textwidth]{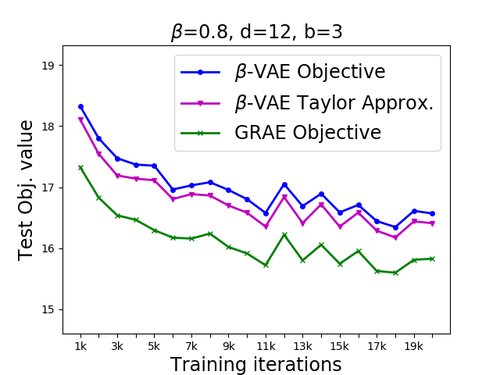}
\includegraphics[width=0.35\textwidth]{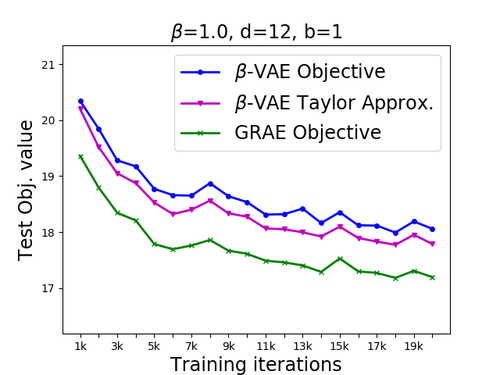}\hspace{-5mm}
\includegraphics[width=0.35\textwidth]{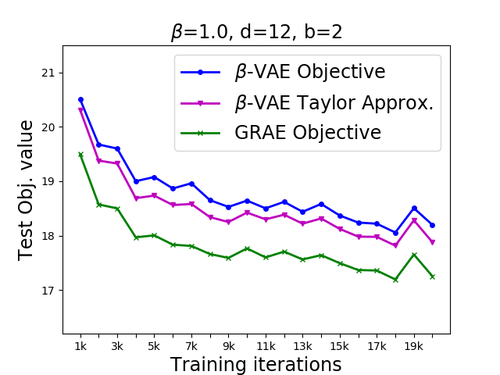}\hspace{-5mm}
\includegraphics[width=0.35\textwidth]{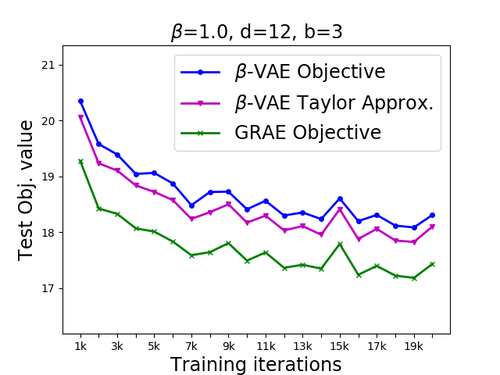}
\end{figure*}

\clearpage
\onecolumn
\vspace{5mm}
\section{Samples from VAE and GRAE}\label{app:samples}
We visualize random CelebA samples using the standard normal prior for $\beta$-VAE and Gaussian-RAE for different values of $\beta$ below. %
Note that for training the GRAE model, we actually minimize the stochastic approximation of the upper bound of Eq. (17) for the decoder regularizer. Hence we will refer to it as {\bf GRAE$_{\approx}$}. 
Samples from models trained using the GRAE objective have similar smoothness/blurriness as samples from $\beta$-VAE models, particularly for small $\beta$ values.

\begin{tabular}{lcccc}
& $\bm{\beta=0.02}$ & $\bm{\beta=0.04}$   \\
\rotatebox[origin=lt]{90}{~~~~~~~~~~~~~~~~~~~~~~~~{\bf $\bm{\beta}$-VAE} Samples}\hspace{-4mm} & \includegraphics[width=0.48\textwidth]{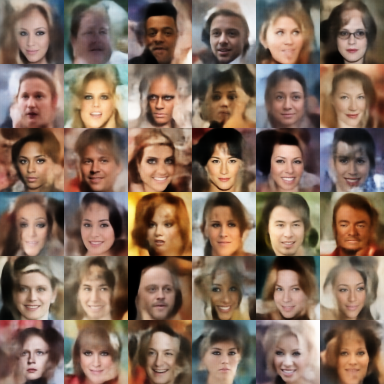}\hspace{-2mm} &
\includegraphics[width=0.48\textwidth]{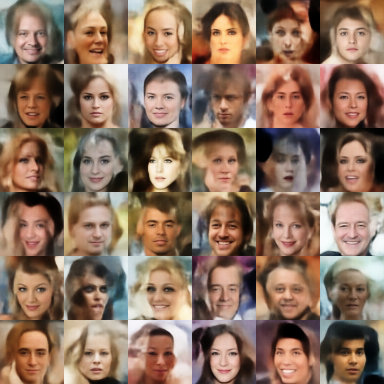}\hspace{-2mm}  \vspace{2mm}\\
\rotatebox[origin=lt]{90}{~~~~~~~~~~~~~~~~~~~~~~~~{\bf $\bm{\approx}$GRAE} Samples}\hspace{-4mm} & \includegraphics[width=0.48\textwidth]{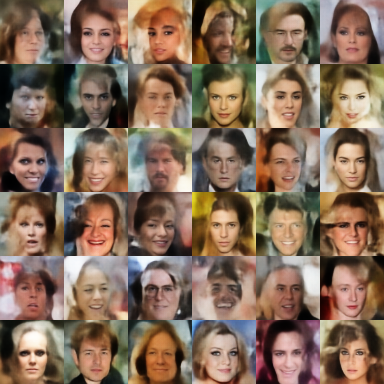}\hspace{-2mm} &
\includegraphics[width=0.48\textwidth]{figures/celeba_rae_orth0/4-lr=0.001,beta=0.02,lambda_orth=0.0__it49999_0_size_6x6_decoder_samples_stdnormal.png}
\end{tabular}

\clearpage

\begin{tabular}{lcccc}
& $\bm{\beta=0.06}$ & $\bm{\beta=0.08}$   \\
\rotatebox[origin=lt]{90}{~~~~~~~~~~~~~~~~~~~~~~~~{\bf $\bm{\beta}$-VAE} Samples}\hspace{-4mm} & \includegraphics[width=0.48\textwidth]{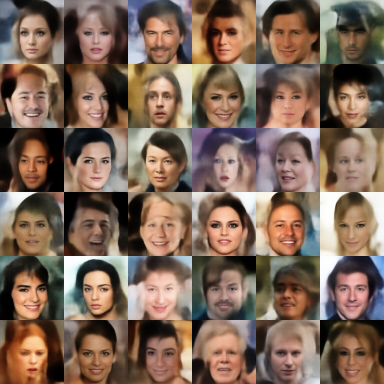}\hspace{-2mm} &
\includegraphics[width=0.48\textwidth]{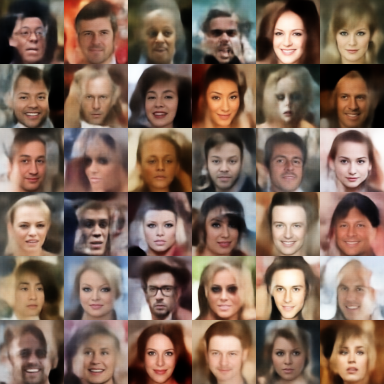}\hspace{-2mm}  \vspace{2mm}\\
\rotatebox[origin=lt]{90}{~~~~~~~~~~~~~~~~~~~~~~~~{\bf GRAE$_{\approx}$} Samples}\hspace{-4mm} & \includegraphics[width=0.48\textwidth]{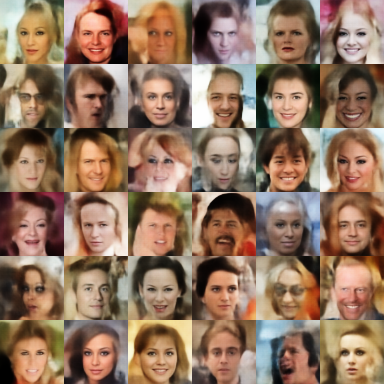}\hspace{-2mm} &
\includegraphics[width=0.48\textwidth]{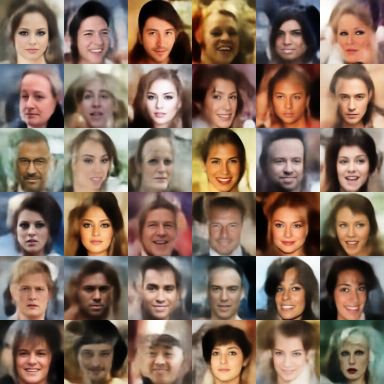}
\end{tabular}

\clearpage

\begin{tabular}{lcccc}
& $\bm{\beta=0.1}$ & $\bm{\beta=0.2}$   \\
\rotatebox[origin=lt]{90}{~~~~~~~~~~~~~~~~~~~~~~~~{\bf $\bm{\beta}$-VAE} Samples}\hspace{-4mm} & \includegraphics[width=0.48\textwidth]{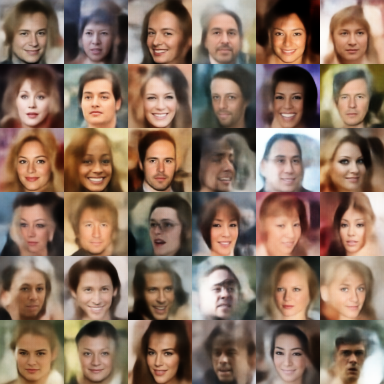}\hspace{-2mm} &
\includegraphics[width=0.48\textwidth]{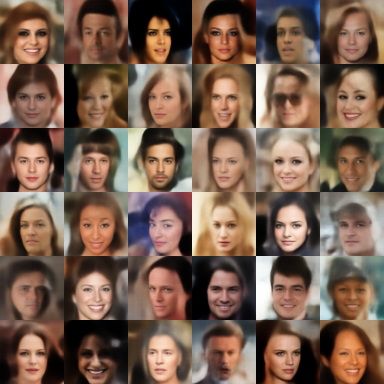}\hspace{-2mm}  \vspace{2mm}\\
\rotatebox[origin=lt]{90}{~~~~~~~~~~~~~~~~~~~~~~~~{\bf GRAE$_{\approx}$} Samples}\hspace{-4mm} & \includegraphics[width=0.48\textwidth]{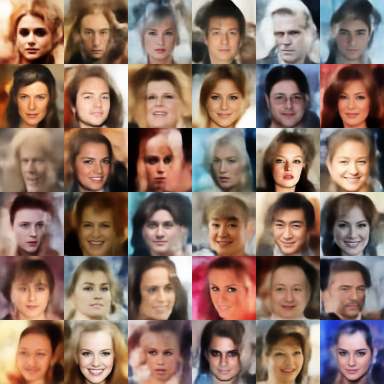}\hspace{-2mm} &
\includegraphics[width=0.48\textwidth]{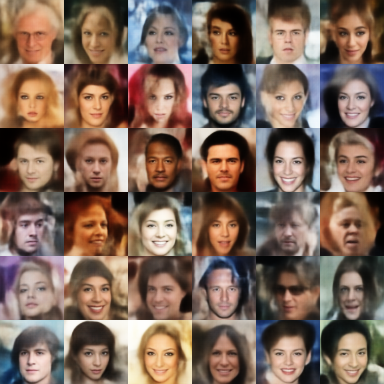}
\end{tabular}

\clearpage

\begin{tabular}{lcccc}
& $\bm{\beta=0.4}$ & $\bm{\beta=0.6}$   \\
\rotatebox[origin=lt]{90}{~~~~~~~~~~~~~~~~~~~~~~~~{\bf $\bm{\beta}$-VAE} Samples}\hspace{-4mm} & \includegraphics[width=0.48\textwidth]{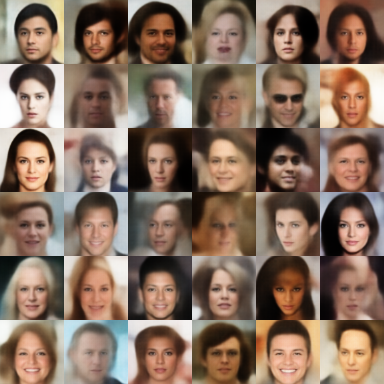}\hspace{-2mm} &
\includegraphics[width=0.48\textwidth]{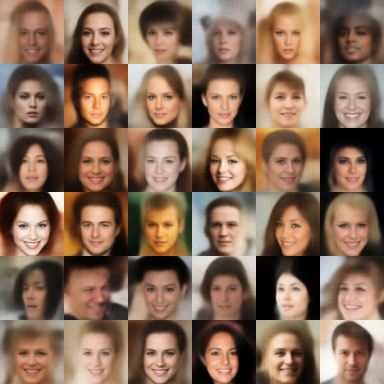}\hspace{-2mm}  \vspace{2mm}\\
\rotatebox[origin=lt]{90}{~~~~~~~~~~~~~~~~~~~~~~~~{\bf GRAE$_{\approx}$} Samples}\hspace{-4mm} & \includegraphics[width=0.48\textwidth]{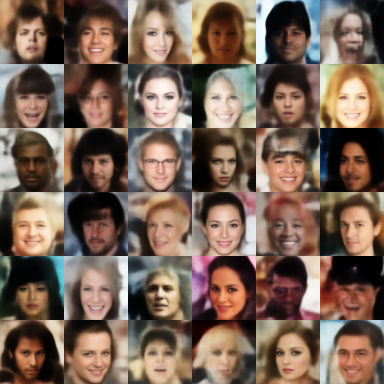}\hspace{-2mm} &
\includegraphics[width=0.48\textwidth]{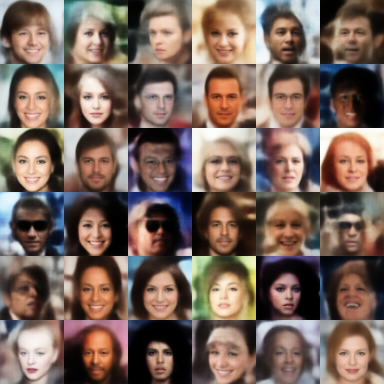}
\end{tabular}

\clearpage
\begin{tabular}{lcccc}
& $\bm{\beta=0.8}$ & $\bm{\beta=1.0}$   \\
\rotatebox[origin=lt]{90}{~~~~~~~~~~~~~~~~~~~~~~~~{\bf $\bm{\beta}$-VAE} Samples}\hspace{-4mm} & \includegraphics[width=0.48\textwidth]{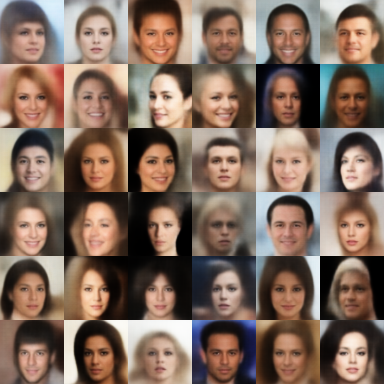}\hspace{-2mm} &
\includegraphics[width=0.48\textwidth]{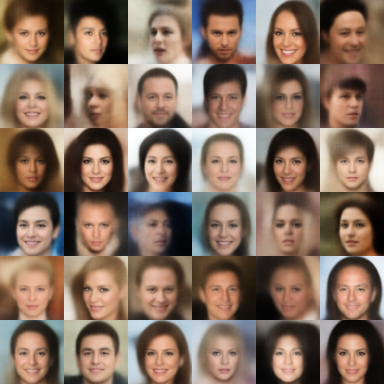}\hspace{-2mm}  \vspace{2mm}\\
\rotatebox[origin=lt]{90}{~~~~~~~~~~~~~~~~~~~~~~~~{\bf GRAE$_{\approx}$} Samples}\hspace{-4mm} & \includegraphics[width=0.48\textwidth]{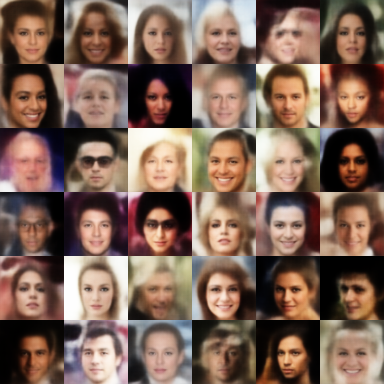}\hspace{-2mm} &
\includegraphics[width=0.48\textwidth]{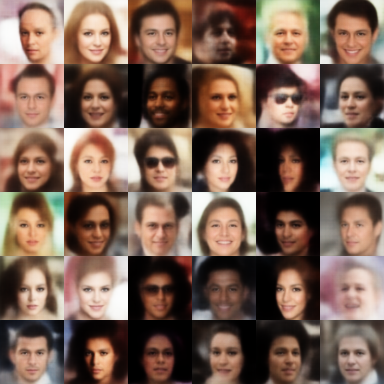}
\end{tabular}

\end{document}